\documentclass{article} 
\usepackage{iclr2025_conference,times}


\usepackage{amsmath,amsfonts,bm}









\def\eqref#1{equation~\ref{#1}}









\def\1{\bm{1}}








\def\vb{{\bm{b}}}

\def\vf{{\bm{f}}}

\def\vt{{\bm{t}}}

\def\vv{{\bm{v}}}

\def\vx{{\bm{x}}}



\DeclareMathAlphabet{\mathsfit}{\encodingdefault}{\sfdefault}{m}{sl}
\SetMathAlphabet{\mathsfit}{bold}{\encodingdefault}{\sfdefault}{bx}{n}











\newcommand{\E}{\mathbb{E}}



\DeclareMathOperator*{\argmax}{arg\,max}
\DeclareMathOperator*{\argmin}{arg\,min}

\usepackage[hidelinks]{hyperref}
\usepackage{url}

\usepackage[utf8]{inputenc} 
\usepackage[T1]{fontenc}    
\usepackage{url}            
\usepackage{booktabs}       
\usepackage{amsfonts}       
\usepackage{nicefrac}       
\usepackage{microtype}      
\usepackage{xcolor}         

\usepackage{amsmath}
\usepackage{amsthm}
\usepackage[capitalise]{cleveref}
\usepackage{graphicx}
\usepackage{epstopdf}
\usepackage{subfig}
\usepackage{caption}
\usepackage{todonotes}
\usepackage{bigints}
\usepackage{wrapfig}
\usepackage{enumitem}
\usepackage{bbm}
\usepackage{soul}
\usepackage{algpseudocode}
\usepackage{amsmath}
\usepackage{amsfonts}

\crefname{algocf}{Alg.}{Algs.}
\Crefname{algocf}{Algorithm}{Algorithms}
\crefname{algocfline}{Alg.}{Algs.}

\usepackage[utf8]{inputenc}
\usepackage[ruled,vlined]{algorithm2e}
\usepackage{amsmath}
\usepackage{amsfonts}

\usepackage{xcolor}



\DeclareRobustCommand{\revOne}[1]{#1}
\DeclareRobustCommand{\revTwo}[1]{#1}
\DeclareRobustCommand{\revThree}[1]{#1}
\DeclareRobustCommand{\revFour}[1]{#1}





\newif\ifpublicversion
\publicversiontrue  

\newcommand{\revise}[2]{%
    \ifpublicversion
        #1  
    \else
        #2  
    \fi
}

\theoremstyle{plain}
\usepackage{thm-restate}

\newtheorem{lemma}{Lemma}
\theoremstyle{definition}
\newtheorem{definition}{Definition}
\newtheorem{hypothesis}{Hypothesis}

\newcommand{\Mod}[1]{\ (\mathrm{mod}\ #1)}


\title{Not All Language Model Features Are\\ \revThree{One-Dimensionally} Linear}

\author{%
  Joshua Engels \\
  MIT \\
  \texttt{jengels@mit.edu} \\
  \And
  Eric J. Michaud \\
  MIT \& IAIFI \\
  \texttt{ericjm@mit.edu} \\
  \And
  Isaac Liao \\
  MIT \\
  \texttt{iliao@mit.edu} \\
  \AND
  Wes Gurnee \\
  MIT \\
  \texttt{wesg@mit.edu} \\
  \And
  Max Tegmark \\
  MIT \& IAIFI \\
  \texttt{tegmark@mit.edu}
}

%

\iclrfinalcopy 

\begin{document}

\maketitle

\begin{abstract}
Recent work has proposed that language models perform computation by manipulating one-dimensional representations of concepts (``features'') in activation space. In contrast, we explore whether some language model representations may be inherently multi-dimensional. We begin by developing a rigorous definition of irreducible multi-dimensional features based on whether they can be decomposed into either independent or non-co-occurring lower-dimensional features. Motivated by these definitions, we design a scalable method that uses sparse autoencoders to automatically find multi-dimensional features in GPT-2 and Mistral 7B. These auto-discovered features include strikingly interpretable examples, e.g. \textit{circular} features representing days of the week and months of the year. We identify tasks where these exact circles are used to solve computational problems involving modular arithmetic in days of the week and months of the year. Next, we provide evidence that these circular features are indeed the fundamental unit of computation in these tasks with intervention experiments on Mistral 7B and Llama 3 8B, and we examine the continuity of the days of the week feature in Mistral 7B. Overall, our work argues that understanding multi-dimensional features is necessary to mechanistically decompose some model behaviors.
\end{abstract}

\section{Introduction}

Language models trained for next-token prediction on large text corpora have demonstrated remarkable capabilities, including coding, reasoning, and in-context learning~\citep{sparks_of_agi, gpt_4_tech_report, claude_3_tech_report, gemini_tech_report}. However, the specific algorithms models learn to achieve these capabilities remain largely a mystery to researchers; we do not understand how language models write poetry. Mechanistic interpretability is a field that seeks to address this gap by reverse-engineering trained models from the ground up into variables (features) and the programs (circuits) that process these variables~\citep{olah2020zoom}.

One mechanistic interpretability research direction has focused on understanding toy models in detail. This work has found multi-dimensional representations of inputs such as lattices~\citep{michaud2024opening} and circles~\citep{ziming_grokking, neel_grokking}, and has successfully reverse-engineered the algorithms that models use to manipulate these representations. A separate direction has identified one-dimensional representations of high level concepts and quantities in large language models~\citep{space_time, linear_truth_sam, monotonic_numeric_representations, dictionary_monosemanticity_anthropic}. These findings have led to the \textit{linear representation hypothesis} (LRH): a hypothesis which has historically claimed both that 1. all representations in pretrained large language models lie along one-dimensional lines, and 2. model states are a simple sparse sum of these representations~\citep{linear_representation_hypothesis, dictionary_monosemanticity_anthropic}. In this work, we specifically call into question the first part of the LRH.\footnote{An earlier version of this manuscript sparked discussion in the mechanistic interpretability community on the distinction between non-linear features and multi-dimensional features, and in fact this discussion directly led to a consensus around the two part LRH we describe above (see \citep{olah2024linear, csordas2024recurrent, mendel2024geometry, kantamneni2025language}). To clarify, we agree with these discussions, and believe the multi-dimensional features we find are "linear" in the sense that they are contained in a low-dimensional linear subspace, but "non-linear" in the sense that this low-dimensional subspace is not one-dimensional (and this is the sense we mean in the title).}

For the most part, these two directions have been disconnected: \cite{yedidate2023helix1} and \cite{successor_heads} find intriguing hints of circular language model representations, and \cite{dictionary_monosemanticity_anthropic} speculate about the existence of feature manifolds, but these brief results only serve to further emphasize the lack of a unifying and satisfying perspective on the nature of language model features. In this work, we seek to bridge this gap by formalizing, investigating, and systematically searching for multi-dimensional language model features.

\subsection{Contributions}
\begin{enumerate}[align=left,leftmargin=*]
    \item In~\cref{sec:definitions}, we generalize the one-dimensional definition of a language model feature to multi-dimensional features and provide an updated multi-dimensional superposition hypothesis to account for these new features.
    \item In~\cref{sec:sae_search}, we build on the definitions proposed in \cref{sec:definitions} to develop a theoretically grounded and empirically practical test that uses sparse autoencoders to find irreducible features. Using this test, we identify multi-dimensional representations automatically in GPT-2
    and Mistral 7B, including circular representations for the day of the week and month of the year. 
    \item In~\cref{sec:circular_tasks}, we show that Mistral 7B and Llama 3 8B use these circular representations when performing modular addition in days of the week and in months of the year. To the best of our knowledge, we are the first to find causal circular representations of concepts in a language model. We additionally find that the model's circular representations respect a continuous notion of time. 
\end{enumerate}

\begin{figure}[t]
    \centering
    \includegraphics{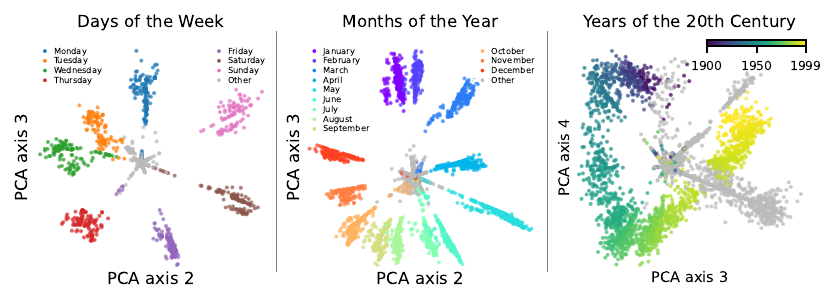}
    \caption{Circular representations of days of the week, months of the year, and years of the 20th century in layer 7 of GPT-2-small \revThree{colored by the token they fire on}. These representations were discovered via clustering SAE dictionary elements, described in~\cref{sec:sae_search}.
    Points are colored according to the token which created the representation. 
    See~\cref{fig:gpt2-3projectionsperrep} for other axes and \cref{fig:mistral-cluster-reconstructions} for similar plots for Mistral 7B.
    }
    \label{fig:gpt2-nonlinears}
\end{figure}

\section{Related Work}

    \textbf{Linear Representations:} Early word embedding methods such as GloVe and Word2vec, although only trained using co-occurrence data, contained directions in their vector spaces corresponding to semantic concepts~\citep{king_and_queen, glove, word2vec}.
    Recent research has found similar evidence of one-dimensional linear representations in sequence models trained only on next token prediction, including Othello board positions~\citep{othello_neel_linear, original_othello_paper}, the truth value of assertions~\citep{linear_truth_sam}, and numeric quantities such as longitude, latitude, birth year, and death year~\citep{space_time, monotonic_numeric_representations}. These results have inspired the linear representation hypothesis~\citep{linear_representation_hypothesis, elhage2022superposition} defined above. \cite{origins_of_linear_representations} provide theoretical evidence for this hypothesis, assuming a latent (binary) variable-based model of language. Empirically, \cite{dictionary_monosemanticity_anthropic} and \cite{other_sae_paper} successfully use sparse autoencoders to break down a model's feature space into an over-complete basis of linear features. These works assume that the number of linear features stored in superposition exceeds the model dimensionality~\citep{elhage2022superposition}.

    \textbf{Multi-Dimensional Representations:} 
    There has been comparatively little research on multi-dimensional features in language models. \revOne{\cite{belief_state_geoemtry} predict and verify that a
transformer trained on a hidden Markov model uses a fractal structure to represent the probability
of each next token, a clear example of a necessary multi-dimensional feature, but the analysis is restricted to a toy setting.} \cite{yedidate2023helix1, yedidate2023helix2} finds that GPT-2 learned position vectors form a helix, which implies a circle when “viewed” from below. Thus, we are not the first to find a circular feature in a language model. However, our work finds circular features that represent latent concepts from text, while the GPT-2 learned position vectors are specific to tokenization, separate from the rest of the model parameters, and causally implicated only due to positional attention masking. Another suggestive result, due to \cite{gpt2_greater_than}, is the presence of a U-shape in the representation of numbers between $0$ and $100$; however, \cite{gpt2_greater_than} find that this representation is not causal, and they only show it exists within a specific prompt distribution. Recent work on dictionary learning~\citep{dictionary_monosemanticity_anthropic} has speculated about multi-dimensional \textit{feature manifolds}; our work is similar to this direction and develops the idea of feature manifolds theoretically and empirically. Finally, in a separate direction, \cite{polytope_lens} argue for interpreting neural networks through the polytopes they split the input space into, and identifies regions of low polytope density as ``valid'' regions for a potential linear representation. 

    \textbf{Circuits:} Circuits research seeks to identify and understand \textit{circuits}, subsets of a model (usually represented as a directed acyclic graph) that explain specific behaviors~\citep{olah2020zoom}. The base units that form a circuit can be layers, neurons~\citep{olah2020zoom}, or sparse autoencoder features~\citep{marks2024sparse}. In the first circuits-style work, \cite{olah2020zoom} found line features that were combined into curve detection features in the InceptionV1 image model. More recent work has examined language models, for example the indirect object identification circuit in GPT-2~\citep{gpt2_ioi}. Given the difficulty of designing bespoke experiments, there has been increased research in automated circuit discovery methods~\citep{marks2024sparse, acdc_circuits, syed2023attribution}.

    \textbf{Interpretability for Arithmetic Problems:} \cite{ziming_grokking} study models trained on modular arithmetic problems $a + b = c \Mod{m}$ and find that models that generalize well have circular representations for $a$ and $b$. Further work by \cite{neel_grokking} and \cite{clock_and_pizza} shows that models use these circular representations to compute $c$ via a ``clock'' algorithm and a separate ``pizza'' algorithm. These papers are limited to the case of a small model trained only on modular arithmetic. Another direction has studied how large language models perform basic arithmetic, including a circuits level description of the greater-than operation in GPT-2~\citep{gpt2_greater_than} and addition in GPT-J~\citep{mechanistic_interp_arithmetic}. These works find that to perform a computation, models copy pertinent information to the token before the computed result and perform the computation in the subsequent MLP layers. Finally, recent work by \cite{successor_heads} investigates language models' ability to increment numbers and finds linear features that fire on tokens equivalent modulo $10$.

\section{Definitions}

\label{sec:definitions}



\def\a{{\mathbf a}}
\def\b{{\mathbf b}}
\def\c{{\mathbf c}}
\def\f{{\mathbf f}}
\def\g{{\mathbf g}}
\def\A{{\mathbf A}}
\def\B{{\mathbf B}}
\def\D{{\mathbf D}}
\def\P{{\mathbf P}}
\def\E{{\mathbf E}}
\def\M{{\mathbf R}}
\def\uu{\mathbf{u}}
\def\W{{\mathbf W}}
\def\x{\mathbf{x}}
\def\z{\mathbf{z}}
\def\y{\mathbf{y}}
\def\xl#1{\x_{#1,l}}
\def\justvv{\mathbf{v}}
\def\vv#1{\mathbf{v}_{#1}}
\def\V{\mathbf{V}}
\def\p{\mathbf{p}}

This section focuses on hypotheses for how hidden states of language models can be decomposed into sums of functions of the input (features). We focus on $L$ layer transformer models $M$ that take in token input ${\bf t} = (t_1, \ldots, t_n)$ \revOne{from input token distribution $\mathcal{T}$}, have hidden states $\xl{1}, \ldots, \xl{n}$ for layers $l$, and output logit vectors $\y_1, \ldots, \y_n$. Given a set of inputs $T$, we let $X_{i, l}$ be the set of all corresponding $\x_{i, l}$.  
We write matrices in capital bold, vectors and vector valued functions in lowercase bold, and sets in capital non-bold.


\subsection{Multi-Dimensional Features}


\begin{definition}[Feature]\label{def:feature}We define a {\it $d_f$-dimensional feature} as a function $\f$ that maps a subset of the input \revOne{space into $\mathbb{R}^{d_f}$}. We say that a feature is {\it active} on the aforementioned subset.
\end{definition}

The input token distribution \revOne{$\mathcal{T}$ induces a $d_f$-dimensional probability distribution over feature vectors $\f(t)$}. As an example, let $n = 1$ (so inputs are single tokens) and consider a feature $\f$ that maps integer tokens to their integer values in $\mathbb{R}^1$. Then $\f$ is a $1$-dimensional feature that is active on integer tokens, and $\f(t)$ is the marginal integer occurrence distribution from the token distribution.


How can we differentiate "true" multi-dimensional features from sums of lower dimensional features? We make this distinction by examining the \textit{reducibility} of a potential multi-dimensional feature. That is, $\f$ is a "true" multi-dimensional feature if it cannot be written as the sum of two statistically independent features \textbf{and} it cannot be written as the sum of two non-co-occurring features. Formally, we have the following definition:



\begin{definition}\label{def:reducible}
    A feature $\f$ is {\it reducible} into features $\a$ and $\b$ if there exists an affine transformation 
    \begin{align}
    \label{eqn:AffineEq}
    \f\mapsto \M\f+\c \equiv \left({\a\atop\b}\right)
    \end{align}
    for some orthonormal $d_f\times d_f$ matrix $\M$ and additive constant $\c$, such that 
    the transformed feature probability distribution $p(\a,\b)$ satisfies at least one of these conditions:
    \begin{enumerate}
        \item $p$ is {\it separable}, i.e., factorizable as a product of its marginal distributions:\\
        $p(\a,\b)=p(\a)p(\b)$.
        \item $p$ is a {\it mixture}, i.e., a sum of disjoint distributions, one of which is lower dimensional:\\
        $p(\a,\b)=wp(\a)\delta(\vb) + (1 - w)p(\a, \b)$
    \end{enumerate}
    \revThree{Here, $p$ is a probability density function that is conditional on the subset of $\mathcal{T}$ that $\vf$ is active on,} $\delta$ is the Dirac delta function, and $0 < w < 1$. By two probability distributions being disjoint, we mean that they have disjoint support (there is no set where both have positive probability measure, or equivalently the two features $\a$ and $\b$ cannot be active at the same time). 
    In \cref{eqn:AffineEq}, $\a$ is the first $k$ components of the vector $\M\f+\c$ and $\b$ is the remaining $d_f - k$ components. 
    When $p$ is separable or a mixture, we also say that $\f$ is separable or a mixture. We term a feature {\it irreducible} if it is not reducible, i.e., 
    if no rotation and translation makes it separable or 
    a mixture.
\end{definition}

An example of a feature that is a mixture is a one hot encoding along a simplex; an example of a feature that is separable is a normal distribution\footnote{since any multidimensional Gaussian can be rotated to have a diagonal covariance matrix}. In natural language, a mixture might be a one hot encoding of ``breed of dog'', while a separable distribution might be the ``latitude'' and ``longitude'' of location tokens. 

In practice, the mixture and separability definitions may not be precisely satisfied.  Thus, we soften our definitions to permit degrees of reducibility:

\begin{definition}[Separability Index and $\epsilon$-Mixture Index]
    \label{def:in_practice_irreducibility}Consider a feature $\f$.  The \textbf{separability index} $S(\f)$ measures the minimal mutual information between all possible $\a$ and $\b$ defined in \cref{eqn:AffineEq}:
    \begin{align}
        S(\f) \equiv& \min I(\a; \b) \label{eqn:sep_index}
    \end{align}
    where $I$ denotes the mutual information. Smaller values of $S(\f)$ mean that $\f$ is more separable.

    The $\epsilon$\textbf{-mixture index} $M_\epsilon(\f)$ tests how often $\f$ can be projected near zero while it is active:
    \begin{align}
        M_\epsilon(\f) =& \max_{\justvv \in \mathbb{R}^{d_f},\; c \in \mathbb{R}} \mathbb{P}_{\vt \in \mathcal{T}}\left(|\justvv\cdot \f(\vt) + c| < \epsilon\sqrt{\mathbb{E}[(\justvv\cdot \f(\vt) + c)^2]}\right) \label{eqn:mixture_index}
    \end{align}
    Larger values of $M_\epsilon(\f)$ mean that $\f$ is more of a mixture.
\end{definition}

In \cref{appendix:more_on_reducibility}, we expand on the intuition behind why the separability and $\epsilon$-mixture indices as defined here correspond to weakened versions of \cref{def:reducible}. 

\revOne{We develop optimization procedures to empirically solve for the separability and $\epsilon$-mixture indices of two dimensional feature distributions. At a high level, the separability procedure iterates over a sweep of rotations and estimates the mutual information  between the axes for each angle, while the $\epsilon$-mixture index procedure performs gradient descent to find the $\epsilon$ band that contains the largest possible fraction of the feature distribution. For more details on the implementation of the tests, see \cref{sec:empirical_test_details}. In \cref{sec:sae_search}, we apply these empirical tests to real language model feature distributions to find irreducible multi-dimensional features; we show the detailed test results on the ``days of the week'' cluster in \cref{fig:weekdays_tests}
}


\subsection{Superposition}

In this section, we propose an updated \textit{superposition hypothesis}~\citep{elhage2022superposition} that takes into account multi-dimensional features. First, we restate the original superposition hypothesis:
    

\begin{definition}[$\delta$-orthogonal matrices]
Two matrices $\A_1 \in \mathbb{R}^{d \times d_1}$ and $\A_2 \in \mathbb{R}^{d \times d_2}$
are $\delta$-{\it orthogonal} if
$|\x_1\cdot\x_2|\le \delta$ for all unit vectors $\x_1\in\text{colspace}(\A_1)$ and 
$\x_2 \in \text{colspace}(\A_2)$.

\end{definition}


\begin{hypothesis}[One-Dimensional Superposition Hypothesis, paraphrased from~\citep{elhage2022superposition}]
    \label{hyp:one_dim_superposition}Hidden states $\x_{i, l}$ are the sum of many ($\gg d$) sparse one-dimensional features $f_i$ and pairwise $\delta$-orthogonal vectors $\vv{i}$ such that $\x_{i, l}(t) = \sum_i \vv{i} f_i(t)$. We set $f_i(t)$ to zero when $t$ is outside the domain of $f_i$.
\end{hypothesis}

In contrast, our new superposition hypothesis posits independence between irreducible multi-dimensional features instead of unknown levels of independence between one-dimensional features:

\begin{hypothesis}[\revOne{Multi-Dimensional} Superposition Hypothesis, changes underlined]
    \label{hyp:our_superposition}
    Hidden states $\x_{i, l}$ are the sum of many ($\gg d$) sparse \ul{low}-dimensional \ul{irreducible} features $\f_i$ and pairwise $\delta$-orthogonal \ul{matrices $\V_i \in \mathbb{R}^{d \times d_{\f_i}}$} such that 
    $\x_{i, l}(t) = \sum_i \underline{\V_i} \f_i(t)$.
    We set $\f_i(t)$ to zero when $t$ is outside the domain of $\f_i$. 
\end{hypothesis}

Note that since multi-dimensional features can be written as the sums of projections of lower-dimensional features, our new superposition hypothesis is a stricter version of \cref{hyp:one_dim_superposition}. In the next section, we will explore empirical evidence for our hypothesis, while in \cref{appendix:bounds}, we prove upper and lower bounds on the number of $\delta$-almost orthogonal matrices $\V_i$ that can be packed into $d$ dimensional space.

\begin{figure}
    \centering
    \includegraphics{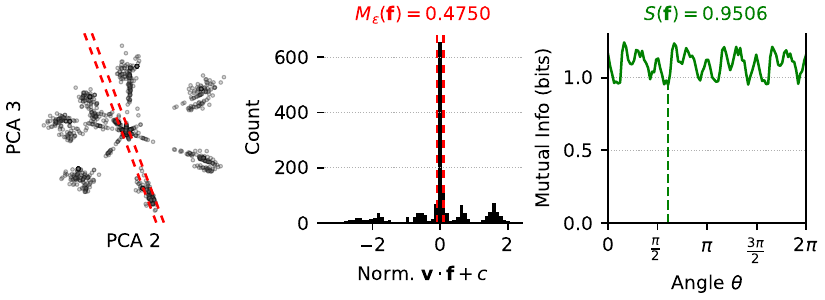}
    \caption{Empirical $\epsilon$-mixture index and separability index for the ``days of the week'' cluster along PCA components 2 and 3. \revOne{\textbf{Left:} The $\epsilon$ band parameterized by $\justvv$ and $c$ that the optimization procedure found contained the highest fraction of points. \textbf{Mid:} Dot products of points in the feature distribution with the $\epsilon$ band; $M_{\epsilon}(\vf)$ is the percent of dot products within $\epsilon = 0.1$ of $0$. \textbf{Right:} Estimated mutual information for different rotations of the space; $S(\vf)$ is the minimum over all rotations}. This point cloud has a lower $\epsilon$-mixture index and higher separability index than PCA projections within typical clusters \revOne{(see \cref{fig:mixture_vs_separability_scatter})}, indicating that it is more likely to be an irreducible multi-dimensional feature.}
    \label{fig:weekdays_tests}
\end{figure}

\section{Sparse Autoencoders Find Multi-Dimensional Features}
\label{sec:sae_search}


\revOne{In this section, we describe a method to identify multi-dimensional features in language model hidden states using sparse autoencoders (SAEs)}. Sparse autoencoders (SAEs) deconstruct model hidden states into sparse vector sums from an over-complete basis \citep{dictionary_monosemanticity_anthropic, other_sae_paper}. For hidden states $X_{i, l}$, a one-layer SAE of size $m$ with sparsity penalty $\lambda$ minimizes the following dictionary learning loss~\citep{dictionary_monosemanticity_anthropic, other_sae_paper}:

\begin{align}
\texttt{DL}(X_{i, l}) = \argmin_{\substack{\E \in \mathbb{R}^{m \times d}, \D \in \mathbb{R}^{d \times m}}} \sum_{\x_{i, l} \in X_{i, l}} \left[ \left \lVert \x_{i, l} - \D \cdot \texttt{ReLU}(\E \cdot \x_{i, l})  \right \rVert_2^2 + \lambda \lVert \texttt{ReLU}(\E \cdot \x_{i, l}) \rVert_0\right] 
\label{eqn:dict}
\end{align}

In practice, the $L_0$ loss on the last term is relaxed to $L_p$ for $0 < p \leq 1$ to make the loss differentiable. We call the $m$ columns of $\D$ (vectors in $\mathbb{R}^d$) \textbf{ dictionary elements}. 


We now argue that SAEs can discover irreducible multi-dimensional features by \textit{clustering} $\D$. We will consider a simple form of clustering: build a complete graph on $\D$ with edge weights equal to the cosine similarity between dictionary elements, prune all edges below a threshold $T$, and then set the clusters equal to the connected components of the graph. If we now consider the spaces spanned by each cluster, they will be approximately $T$-orthogonal by construction, since their basis vectors are all $T$-orthogonal. Now, consider some irreducible two-dimensional feature $\f$; we claim that \revOne{if the SAE is large enough and $\f$ is active enough such that the SAE can reconstruct $\f$ when $\f$ is active}, one of the clusters is likely to be exactly equal to $\f$. If $\D$ includes just two dictionary elements spanning $\f$, then these elements both must have nonzero activations post-ReLU to reconstruct $\f$ (otherwise $\f$ is a mixture). Because of the sparsity penalty in \cref{eqn:dict}, this two-vector solution to reconstruct $\f$ is disincentivized, so instead the dictionary is likely to learn many elements that span $\f$. These dictionary elements will then have a high cosine similarity, and so the edges between them will not be pruned away during the clustering process; hence, they will be in a cluster.



Thus, we have a way to operationalize \cref{hyp:our_superposition}: clustering $\D$ finds $T$-orthogonal subspaces, and if irreducible multi-dimensional features exist, they are likely to be equal to some of these subspaces. This suggests a natural approach to using sparse autoencoders to search for irreducible multi-dimensional features:

1. Cluster dictionary elements by their pairwise cosine similarity. We use both the simple similarity-based pruning technique described above, as well as spectral clustering; see \cref{app:gpt2_and_mistral_sae_details} for details, including comments on scalability.

2. For each cluster, run the SAEs on all $\x_{i, l} \in X_{i, l}$ and ablate all dictionary elements not in the cluster. This will give the reconstruction of each $\x_{i, l}$ restricted to the cluster found in step $1$ (if no cluster dictionary elements are non-zero for a given point, we ignore the point).

3. Examine the resulting reconstructed activation vectors for irreducible multi-dimensional features.
This step can be done manually by visually inspecting the PCA projections for known irreducible multi-dimensional structures (e.g. circles, see \cref{fig:combined_reducibility}) or automatically by passing the PCA projections to the tests for \cref{def:in_practice_irreducibility}.

\begin{wrapfigure}{ht}{0.4\textwidth}
  \vspace{-0.5cm}
  \begin{center}
    \includegraphics{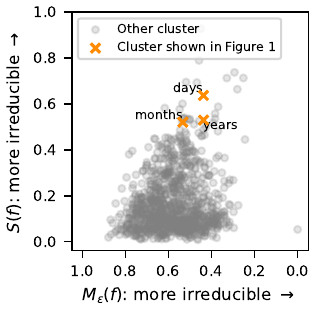}
  \end{center}
 \vspace{-0.4cm}
  \caption{Mixture index and separability index of GPT-2 features. Features from \cref{fig:gpt2-nonlinears}, which we had manually identified, score highly as candidate multidimensional features with these metrics.}
 \label{fig:mixture_vs_separability_scatter}
 \vspace{-0.6cm}
\end{wrapfigure}

Pseudocode for this method is in the appendix in \cref{alg:clustering}. This method succeeds on toy datasets of synthetic irreducible multi-dimensional features; see \cref{appendix:toy}.\footnote{Code: 
\revise{\url{https://github.com/JoshEngels/MultiDimensionalFeatures}}{\url{https://anonymous.4open.science/r/MultiDimensionalFeatures-D6D4}}}
We apply this method to language models using GPT-2~\citep{gpt_2} SAEs trained by \cite{bloom2024gpt2residualsaes} for every layer and Mistral 7B~\citep{mistral_7b} SAEs that we train on layers 8, 16, and 24 (training details in \cref{app:mistral_saes}). 

Strikingly, we reconstruct irreducible multi-dimensional features that are \textit{interpretable} circles: in GPT-2, days, months, and years are arranged circularly in order (see \cref{fig:gpt2-nonlinears}); in Mistral 7B, days and months are arranged circularly in order (see \cref{fig:mistral-cluster-reconstructions}). \revOne{These plots contain the PCA dimensions that most clearly show circular structure; these best dimensions are usually the second and third because the first PCA dim is an ``intensity'' direction that manifests as the radius of the circle in \cref{fig:gpt2-nonlinears} (thus the overall structure for these multi-d features is perhaps best thought of as a cone). See \cref{fig:gpt2-3projectionsperrep} for all PCA dimensions visualized).}
 
For each cluster of GPT-2 SAE features, we take the reconstructed activations and project them onto PCA components 1-2, 2-3, 3-4, and 4-5 (or fewer if there are fewer features in the cluster) and measure the separability index and $\epsilon$-mixture index of each 2D point cloud as described in \cref{sec:empirical_test_details}. The mean scores across these planes are a computationally tractable approximation of \cref{def:in_practice_irreducibility}. We plot these mean scores in \cref{fig:mixture_vs_separability_scatter}, and find that the features which we had manually identified in \cref{fig:gpt2-nonlinears} are among the top scoring features along both measures of irreducibility. Thus, our theoretical tests can indeed be used to find interpretable irreducible features. We show the top $20$ feature clusters, measured by the product of $(1 - \epsilon$-mixture index $)$ and separability index, in \cref{app:other_clusters}. Out of all $1000$ clusters, the \cref{fig:gpt2-nonlinears} clusters rank $9, 28,$ and $15$ by this metric, respectively.\footnote{We also tried an alternative ranking scheme: we sorted the clusters by  separability and irreducibility and set the cluster score equal to the minimum sorted position between the two sorted lists. The \cref{fig:gpt2-nonlinears} clusters rank $8, 105, $ and $12$ by this metric.}

\section{Circular Representations in Large Language Models}
\label{sec:circular_tasks}

In this section, we examine tasks in which models \textit{use} the multi-dimensional features we discovered in \cref{sec:sae_search}, thereby providing evidence that these representations are indeed the fundamental unit of computation for some problems.
Inspired by prior work studying circular representations in modular arithmetic \citep{ziming_grokking}, we define two prompts that represent ``natural'' modular arithmetic tasks:

\begin{center}
\texttt{Weekdays} task: \textit{``Let's do some day of the week math. Two days from Monday is''}\\
\texttt{Months} task: \textit{``Let's do some calendar math. Four months from January is''}
\end{center}

\begin{figure}
\centering

\begin{minipage}{.43\textwidth}
  \centering
    \captionof{table}{Aggregate model accuracy on days of the week and months of the year modular arithmetic tasks. Performance broken down by problem instance in \cref{appendix:more_circular_plots}.
    }
    \label{tab:circle_acc}
    \begin{tabular}{c c c }
            \toprule
            Model & \texttt{Weekdays} & \texttt{Months}\\
            \midrule
            Llama 3 8B & 29 / 49 & 143 / 144 \\
            \midrule
            Mistral 7B & 31 / 49 & 125 / 144\\
            \midrule
            GPT-2 & 8 / 49 & 10 / 144\\
            \bottomrule
            \end{tabular}
\end{minipage}%
\hfill
\begin{minipage}{.53\textwidth}
  \centering
  
  \includegraphics{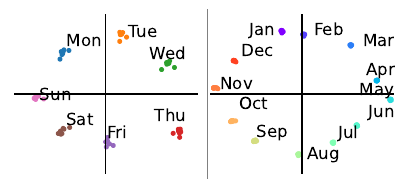}
  \captionof{figure}{Top two PCA components on the $\alpha$ token. Colors show $\alpha$. \textbf{Left:} Layer $30$ of Mistral on \texttt{Weekdays}. \textbf{Right:} Layer $5$ of Llama on \texttt{Months}.}
  \label{fig:top_2_pca_small}
  
\end{minipage}

\end{figure}

For \texttt{Weekdays}, we range over the $7$ days of the week and durations between $1$ and $7$ days to get $49$ prompts. For \texttt{Months}, we range over the $12$ months of the year and durations between $1$ and $12$ months to get $144$ prompts. Mistral 7B and Llama 3 8B~\citep{llama3modelcard} achieve reasonable performance on the \texttt{Weekdays} task and excellent performance on the \texttt{Months} task (measured by comparing the highest logit valid token against the ground truth answer), as summarized in \cref{tab:circle_acc}. Interestingly, although these problems are equivalent to modular arithmetic problems $\alpha + \beta \equiv \; ? \Mod m$ for $m = 7, 12$, both models get trivial accuracy on plain modular addition prompts, 
e.g. ``$5 + 3 \Mod{7} \equiv $''. Finally, although GPT-2 has circular representations, it gets trivial accuracy on \texttt{Weekdays} and \texttt{Months}.

To simplify discussion, let $\alpha$ be the day of the week or month of the year token (e.g. ``Monday'' or ``April''), $\beta$ be the duration token (e.g. ``four'' or ``eleven''), and $\gamma$ be the target ground truth token the model should predict, such that (abusing notation) we have $\alpha + \beta = \gamma$. Let the prompts of the task be parameterized by $j$, such that the $j$th prompt asks about $\alpha_j$, $\beta_j$, and $\gamma_j$.

\revOne{We confirm that Llama 3 8B and Mistral 7B have circular representations of $\alpha$ on this task by examining the PCA projections of hidden states across prompts at various layers on the $\alpha$ token. We plot two of these in \cref{fig:top_2_pca_small} and show all layers in \cref{fig:all_pcas}. These plots show circular representations as the highest varying two components in the model’s representation of $\alpha$ at many layers.}

\subsection{Intervening on Circular Day and Month Representations}
\label{sec:interventions}



\begin{figure}[t]
    \centering
    \begin{minipage}[t]{0.4\textwidth}
        \centering
        \includegraphics[width=\textwidth]{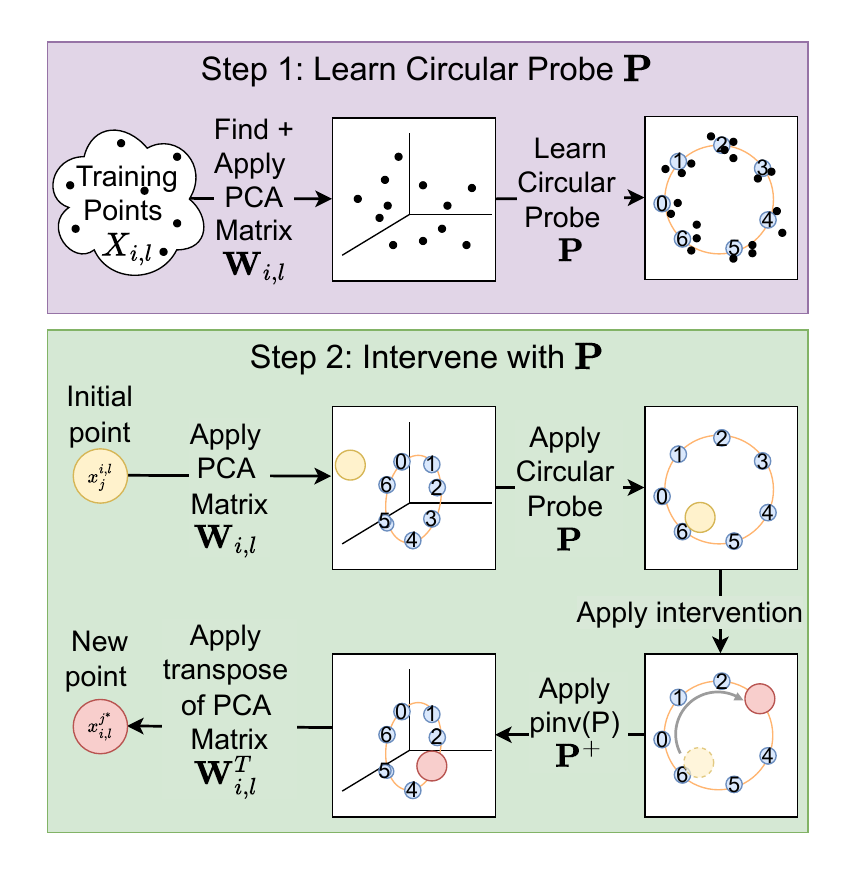}
        \caption{Visual representation of the circular intervention process. \revTwo{\textbf{Top:} We learn a circular probe on the PCA projection of a training set. \textbf{Bot:} To intervene, we change the circular representation to $\alpha_j'$ and average ablate other dimensions.}}
        \label{fig:circle_intervention_diagram}
    \end{minipage}
    \hfill
    \begin{minipage}[t]{0.57\textwidth}
        \centering
        \includegraphics[width=\textwidth]{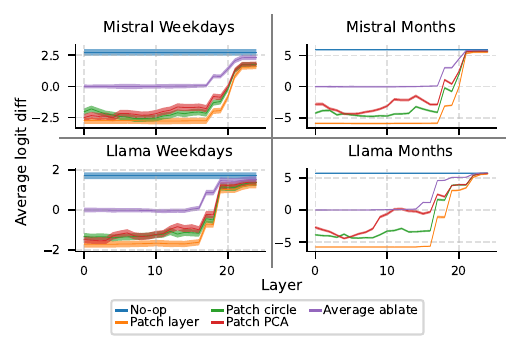}
        \caption{\revTwo{Mean and 96\% error bars for intervening on the $\alpha$ token across layers using different intervention methods. The circular intervention technique outperforms patching only the top $5$ PCA components and leaving the rest unchanged, and almost reaches the upper bound performance of patching the entire layer.}}
        \label{fig:all_interventions}
    \end{minipage}
\end{figure}

We now experiment with \textit{intervening} on these circular representations. We base our experiments on the common interpretability technique of activation patching, which replaces activations from a ``dirty'' run of the model with the corresponding activations from a ``clean'' run~\citep{best_practices_activation_patching}. Activation patching empirically tests whether a specific model component, position, and/or representation has a causal influence on the model's output. We employ a custom subspace patching method to allow testing for whether a specific \textit{circular subspace} of a hidden state is sufficient to causally explain model output. Specifically, our patching technique relies on the following steps (visualized in \cref{fig:circle_intervention_diagram}):

1. \textbf{Find a subspace with a circle to intervene on}: Using a PCA reduced activation subspace to avoid overfitting, we train a ``circular probe'' to identify representations which exhibit strong circular patterns. More formally, let $\x^j_{i, l}$ be the hidden state at layer $l$ token position $i$ for prompt $j$. Let $\W_{i, l} \in \mathbb{R}^{k \times d}$ be the matrix consisting of the top $k$ principal component directions of $\x^j_{i, l}$. In our experiments, we set $k = 5$. We learn a linear probe $\P \in \mathbb{R}^{2, k}$ from $\W_{i, l} \cdot X_{i, l}$ to a unit circle in $\alpha$. In other words, if $\texttt{circle}(\alpha) = [\cos(2 \pi \alpha/7), \sin(2 \pi \alpha/7)]$ for \texttt{Weekdays} and $\texttt{circle}(\alpha) = [\cos(2 \pi \alpha / 12), \sin(2 \pi \alpha / 12)]$ for \texttt{Months}, $\P$ is defined as follows:
\begin{align}
\label{eqn:circle_probe}
\P = \argmin_{ \P' \in \mathbb{R}^{2, k}} \sum_{\x^j_{i, l}}\left\lVert \P' \cdot \W_{i, l} \cdot \x^j_{i, l} - \texttt{circle}(\alpha) \right\rVert_2^2  
\end{align}
2. \textbf{Intervene on the subspace}: Say our initial prompt had $\alpha = \alpha_j$ and we are intervening with $\alpha = \alpha_{j'}$. In this step, we replace the model's projection on the subspace $\P \cdot \W_{i,l}$, which will be close to $\texttt{circle}(\alpha_j$), with the ``clean'' point $\texttt{circle}(\alpha_{j'})$. Note that we do not use the hidden state $\x^{j'}_{i, l}$ from the ``clean'' run, only the ``clean'' label $\alpha_{j'}$. \revOne{In practice, other subspaces of $\x^j_{i, l}$ may be used concurrently by the model in ``backup'' circuits (see e.g. \cite{gpt2_ioi}) to compute the answer, so if we just intervene on the circular subspace the remaining components of the activation may interfere in downstream computations. Thus, to isolate the effect of our intervention, we set the average ablate the portion of the activation not in the intervened subspace}. Letting $\overline{{\bf x}_{i, l}}$ be the average of ${\bf x}^j_{i, l}$ across all prompts indexed by $j$ and $\P^{+}$ be the pseudoinverse of $\P$, we intervene via the formula
\begin{align}
    \label{eqn:intervention}
    \x^{j^*}_{i, l} = \overline{{\bf x}_{i, l}} + {\W_{i, l}}^T\P^{+}(\texttt{circle}(\alpha_{j'}) - \overline{{\bf x}_{i, l}}) 
\end{align}


We run our patching on all $49$ \texttt{Weekday} problems and $144$ \texttt{Month} problems and use as ``clean'' runs the $6$ or $11$ other possible values for $\beta$, resulting in a total of $49 * 6$ patching experiments for \texttt{Weekdays} and $144 * 11$ patching experiments for \texttt{Months}. We also run baselines where we
(1) replace the entire subspace corresponding to the first $5$ PCA dimensions with the corresponding subspace from the clean run, (2) replace the entire layer with the corresponding layer from the clean run, and (3) replace the entire layer with the average across the task. The metric we use is \textit{average logit difference} across all patching experiments between the original correct token ($\alpha_j$) and the target token ($\alpha_{j'}$). See~\cref{fig:all_interventions} for these interventions on all layers of Mistral 7B and Llama 3 8B on \texttt{Weekdays} and \texttt{Months}.

\begin{wrapfigure}{ht}{0.4\textwidth}
  \vspace{-0.5cm}
  \begin{center}
    \includegraphics{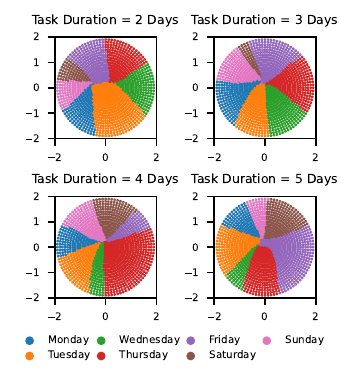}
  \end{center}
 \vspace{-0.4cm}
  \caption{Off distribution interventions on Mistral layer $5$ on the \texttt{Weekdays} task. The color corresponds to the highest logit $\gamma$ after performing the circular subspace intervention on that point.}
 \label{fig:off_distribution_circle_intervention}
 \vspace{-0.5cm}
\end{wrapfigure}
The main takeaway from \cref{fig:all_interventions} is that circular subspaces are causally implicated in computing $\gamma$, especially for \texttt{Weekdays}. Across all models and tasks, early layer interventions on the circular subspace have almost the same intervention effect as patching the entire layer, and are usually better than patching the top PCA dimensions from the clean problem. 

Patching experiments in \cref{appendix:patching} 
show $\alpha$ is copied to the final token on layers $15$ to $17$, which is why interventions drop off there. \revThree{Additionally, while in this section we train a probe on a dataset of prompts, in \cref{sec:intervening_with_sae}, we show that intervening on the circle discovered via SAE clustering in \cref{sec:sae_search} also works.}

To investigate exactly how models use the circular subspace, we perform \textit{off distribution} interventions. We modify \cref{eqn:intervention} so that instead of intervening on the circumference $\texttt{circle}(\alpha)$, we sweep over a grid of positions $(r, \theta)$ within the circle: 
\begin{align}
    \label{eqn:new_intervention}
    \x^{j^*}_{i, l} = \overline{{\bf x}_{i, l}} + {\W_{i, l}}^T\P^{+}[r\cos(\theta), r\sin(\theta)]^T - \overline{{\bf x}_{i, l}}) 
\end{align}
We intervene with $r \in [0, 0.1, \ldots, 2], \theta \in [0, 2 \pi / 100, \ldots, 198 \pi / 100]$ and record the highest logit $\gamma$ after the forward pass. \cref{fig:off_distribution_circle_intervention} displays these results on Mistral layer $5$ for $\beta \in [2, 3, 4 5]$. They imply that Mistral treats the circle as a multi-dimensional representation with $\alpha$ encoded in the angle. 

\subsection{Intervening with the SAE Plane}
\label{sec:intervening_with_sae}
In the last section, we train probes manually on the PCA of the activations to fit a circle. A perhaps more natural approach is intervening with the precise circle we found in \cref{sec:sae_search}. To determine if this approach is feasible, we first project layer 8 Mistral 7B \texttt{Weekdays} activations into the weekdays plane that was discovered by clustering (see \cref{fig:mistral-cluster-reconstructions}; the plane is defined by PCA dimensions 2 and 3 of the cluster). In the rest of this section, we call this plane the SAE plane. In \cref{fig:sae_projections}, we find that indeed, the \texttt{Weekdays} representations projected into the SAE plane form a circle (see \cref{fig:sae_projections}). We thus can fit a circular probe to this 2D plane as in \cref{eqn:circle_probe}. Similarly, we call this probe the SAE probe.

\begin{wrapfigure}{ht}{0.5\textwidth}
  \vspace{-0.5cm}
  \begin{center}
    \includegraphics[width=0.5\textwidth]{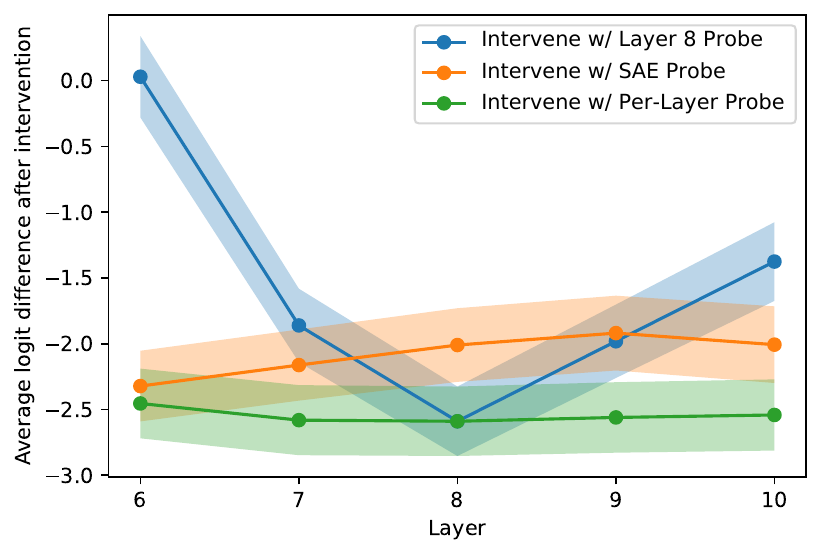}
  \end{center}
 \vspace{-0.4cm}
  \caption{Interventions on the Mistral 7B  \texttt{Weekdays} task with different methods of determining the probe.}
 \label{fig:sae_interventions_main_body}
 \vspace{-0.3cm}
\end{wrapfigure}

Because we only have layer 8 clustering results for Mistral, we train an SAE probe only on layer 8. We then evaluate interventions using this SAE probe on layer 8 of Mistral, but also at neighboring layers, since nearby layers should have similar representations (see e.g. \cite{belrose2023eliciting}). We compare to two baselines: 1) training a normal circular probe on the PCA projections of each layer as described in \cref{sec:interventions}, and 2) training a circular probe only on layer 8 and then evaluating on adjacent layers (in the same way as for the layer 8 SAE probe). We show the results of these methods in \cref{fig:sae_interventions_main_body}.

We find that on all layers, using the SAE probe only slightly decreases intervention performance as compared to training a circular probe (from -2.58 to -2.01 average logit difference on layer 8). Even more interestingly, the layer 8 SAE probe is much more robust to layer shifts than the layer 8 circular probe; for example, using the layer 8 circular probe on layer 6 results in an average logit difference of 0.029, whereas using the layer 8 SAE probe results in an average logit difference of -2.32. This is intriguing evidence that the SAE is perhaps finding more ``true'' (or at least more robust) features than our circular probing technique.

\subsection{Continuity of Circular Representations}
\label{sec:continuity}

\begin{wrapfigure}{h}{0.4\textwidth}
  \vspace{-0.8cm}
  \begin{center}
    \includegraphics[width=0.4\textwidth]{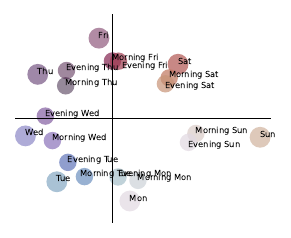}
  \end{center}
  \vspace{-0.4cm}
  \caption{Layer $30$ Mistral 7B activations for [morning/evening] on [Monday/Tuesday/.../Sunday], plotted projected into the PCA plane for [Monday/Tuesday/.../Sunday].}
 
\label{fig:continuous_weekdays_mistral_2}
 \vspace{-0.3cm}
\end{wrapfigure}

In past sections, the representations of the interpretable numeric quantities we have discovered have been mostly \textit{discontinuous}; that is, the days of the week and months of the year in \cref{fig:gpt2-nonlinears} and \cref{fig:mistral-cluster-reconstructions} are clustered at the vertices of a heptagon and dodecagon, and there is nothing "between" adjacent weekdays or months along the circle. \revOne{In this section, we will examine the "continuity" of the circular features we have discovered.} Although continuity of the representation is not a requirement of \cref{def:in_practice_irreducibility}, 
it would further decrease the $\epsilon$-mixture index, and would also increase our subjective perception of the circular feature as an intrinsic model feature representing a continuous quantity (time). Thus, we create a synthetic dataset containing the text "[very early/very late] on [Monday/Tuesday/.../Sunday]" and simply plot the projections of the layer $30$ activations into the top two PCA components of the activations of [Monday/Tuesday/.../Sunday]. The results, shown in \cref{fig:continuous_weekdays_mistral_2}, show that Mistral 7B indeed can map intermediate quantities to their expected place in the circle: the very early and very late version of each weekday are more towards the last and the next weekday along the circle, respectively. We show similar results for "[morning/evening] on [Monday/Tuesday/.../Sunday]" in Appendix \cref{fig:continuous_weekdays_mistral_1}.



\section{Discussion}
\label{sec:Discussion}
Our work proposes a significant refinement to the simple one-dimensional linear representation hypothesis. While previous work has convincingly shown the existence of one-dimensional features, we find evidence for irreducible multi-dimensional representations, requiring us to generalize the notion of a feature to higher dimensions. Fortunately, we find that existing unsupervised feature extraction methodologies like sparse autoencoders can readily be applied to discover multi-dimensional representations. However, we think our work raises interesting questions about whether individual SAE features are appropriate ``mediators''~\citep{mueller2024quest} for understanding model computation, if some features are in fact multi-dimensional. Although taking a multi-dimensional representation perspective may be more complicated, we believe that uncovering the true (perhaps multi-dimensional) nature of model representations is necessary for discovering the underlying algorithms that use these representations. Ultimately, our field aims to turn complex circuits in future more-capable models into formally verifiable programs~\citep{manifesto_one, manifesto_two}, which requires the ground truth ``variables'' of language models; we believe this work takes an important step towards discovering these variables. 

\paragraph{Limitations:} 
\revOne{It is unclear why we did not find more interpretable multi-dimensional features. We are unsure if we are failing to interpret some of the high-scoring multi-dimensional features,} \revFour{if most multi-dimensional features lie in dimensions higher than two,} \revOne{ if our clustering technique is not powerful enough to find some features, or if there are truly not that many. Additionally, our definitions for irreducible features (\cref{def:reducible}) are purely statistical and not intervention based, and also had to be relaxed to hold in practice, resulting in measures that return a possibly subjective ``degree'' of reducibility (\cref{def:in_practice_irreducibility}). Thus, although this work provides preliminary evidence for the multi-dimensional superposition hypothesis (\cref{hyp:our_superposition}), it is still unclear if this theory provides the best description for the representations models use. Future work might make progress on this question by investigating new techniques for decomposing model representations, exploring higher dimensional representations, or determining conclusively whether models use representations in ways that necessitate the representations are non-linear.}



\revise{
\subsubsection*{Acknowledgments}

We thank (in alphabetical order) Dowon Baek, Kaivu Hariharan, Vedang Lad, Ziming Liu, and Tony Wang for helpful discussions and suggestions. This work is supported by Erik Otto, Jaan Tallinn, the Rothberg Family Fund for Cognitive Science, the NSF Graduate Research Fellowship (Grant No. 2141064), and IAIFI through NSF grant PHY-2019786.
}{}

{
\bibliography{iclr2025_conference}
\bibliographystyle{iclr2025_conference}
}

\newpage
\appendix

\section{Multi-Dimensional Feature Capacity}
\label{appendix:bounds}

The Johnson-Lindenstrauss (JL) Lemma~\citep{Johnson1984} implies that we can choose $e^{C d \delta^2}$ pairwise one-dimensional $\delta$-orthogonal vectors to satisfy \cref{hyp:one_dim_superposition} for some constant $C$, thus allowing us to build the model's feature space with a number of one-dimensional $\delta$-orthogonal features exponential in $d$. We now prove a similar result for low-dimensional projections (the main idea of the proof is to combine $\delta$-orthogonal vectors as guaranteed from the JL lemma):


\begin{restatable}{thm}{packingalmostorthogonal}
    \label{thm:packing_almost_ortho}
    For any $d'$ and $\delta$, it is possible to choose $\frac{1}{d_{\max}}e^{C_1 (d / d'^2) \delta^2}$ pairwise $\delta$-orthogonal matrices $\A_i \in \mathbb{R}^{n_i \times d'}$ for some constant $C_1$. Furthermore, it is not possible to choose more than $e^{C_2(d - d_{\max}\delta \log(\frac{1}{\delta}))}$ for some constant $C_2$.
\end{restatable}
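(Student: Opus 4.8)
The plan is to prove the two bounds independently: the lower bound by an explicit construction that glues together almost-orthogonal \emph{vectors} supplied by Johnson--Lindenstrauss into blocks of $d'$ columns, and the upper bound by a ``peeling'' argument that repeatedly projects the surviving column spaces into the orthogonal complement of one of them.

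For the lower bound, the key observation is that if a group of $d'$ unit vectors is pairwise $\epsilon$-orthogonal for a tolerance $\epsilon$ somewhat smaller than $1/d'$, then the $d'$-dimensional subspace they span is $O(\epsilon d')$-orthogonal to the span of any disjoint such group. So I would: (i) invoke the JL packing cited above to obtain $e^{C d \epsilon^2}$ pairwise $\epsilon$-orthogonal unit vectors in $\mathbb{R}^d$ with $\epsilon := c\delta/d'$ for a small absolute constant $c$; (ii) partition them into $\lfloor e^{Cd\epsilon^2}/d' \rfloor$ blocks of $d'$ vectors and let $\A_i$ collect the $i$th block as columns; (iii) verify $\delta$-orthogonality of the column spaces. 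Step (iii) is the only real computation: expanding a unit vector of one column space in that block's (near-orthonormal) basis, the block Gram matrix is the identity up to off-diagonal entries of size $\epsilon$, hence well conditioned once $\epsilon d' < 1$, which pins the coefficient vector to $\ell_2$-norm $O(1)$ and therefore $\ell_1$-norm $O(\sqrt{d'})$; the inner product of two such unit vectors from different blocks is then at most $\epsilon$ times the product of their $\ell_1$-norms, i.e.\ $O(\epsilon d') = O(\delta)$, and choosing $c$ makes this $\le\delta$. Counting blocks gives $\tfrac{1}{d'}e^{C_1(d/d'^2)\delta^2}$ matrices (with $d_{\max}=d'$ when all blocks have the same width; the general $n_i\le d_{\max}$ case follows by padding).

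For the upper bound, suppose $\A_1,\dots,\A_N$ are pairwise $\delta$-orthogonal with column spaces of dimension at most $d_{\max}$. The plan is to recurse: $\delta$-orthogonality means $\mathrm{colspace}(\A_1)$ meets every other column space only in a component of size $\le\delta$, so projecting $\A_2,\dots,\A_N$ onto $\mathrm{colspace}(\A_1)^\perp$ is almost an isometry on each of them and degrades pairwise orthogonality only by a bounded factor (from $\delta$ to $\le 4\delta$, say). This yields $N-1$ column spaces of dimension $\le d_{\max}$ that are $O(\delta)$-orthogonal inside a space of dimension $\le d - d_{\max}$. Iterating roughly $t \approx \log(1/\delta)$ rounds either exhausts the ambient dimension (giving $N$ at most a constant multiple of $d/d_{\max}$) or drives the tolerance past a constant, after which the remaining count is controlled by a crude spherical-cap volume bound in dimension $\approx d - t\,d_{\max}$; balancing these produces a bound of the stated form $e^{C_2(d - d_{\max}\delta\log(1/\delta))}$. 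An alternative, if this recursion is awkward to make tight, is to collect orthonormal bases of all column spaces into a single family of $\ge N$ pairwise $\delta$-orthogonal unit vectors and apply the converse JL / cap-packing bound on such families, sharpened to account for the fact that $d_{\max}$ of those vectors are forced into a common subspace.

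The lower bound is essentially bookkeeping once the conditioning lemma for near-orthonormal blocks is in place, so the main obstacle is the upper bound. Converse bounds for almost-orthogonal configurations are not tight in general, and here one must additionally keep track of how the shared-subspace structure tightens the count; getting the recursion (tolerance blow-up per round versus dimension consumed per round) and the residual volume estimate to combine into exactly the claimed exponent, uniformly over the relevant ranges of $\delta$ and $d_{\max}$, is where the work lies.
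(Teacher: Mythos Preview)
Your lower-bound construction is essentially the paper's: invoke the Johnson--Lindenstrauss vector packing at a tolerance $\epsilon \asymp \delta/d'$, group into blocks of $d'$ columns, and bound cross-block inner products via an $\ell_1$-control on the representing coefficients obtained from the near-identity Gram matrix. The paper records the coefficient bound as a separate lemma (using Gershgorin to lower-bound the least singular value of a block, exactly your conditioning step) and then remaps the tolerance at the end rather than at the start, but the argument is the same.

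For the upper bound the paper takes a much shorter route than your peeling recursion. Instead of projecting out one column space at a time and tracking tolerance blow-up, it exploits Alon's matching lower and upper bounds for $\delta$-orthogonal \emph{vector} packings in both directions simultaneously: inside each $d_{\max}$-dimensional column space it \emph{inserts} a near-maximal family of $e^{C d_{\max}\delta^2\log(1/\delta)}$ pairwise $\delta$-orthogonal unit vectors (Alon lower bound applied to a $d_{\max}$-dimensional space), observes that vectors from different column spaces are automatically $\delta$-orthogonal by hypothesis, and then caps the total collection by Alon's upper bound $e^{C d\delta^2\log(1/\delta)}$ in $\mathbb{R}^d$. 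Dividing gives $k < e^{C(d-d_{\max})\delta^2\log(1/\delta)}$ in one line. Your alternative sketch---collect orthonormal bases and apply the converse packing bound---is close to this but misses the crucial step of over-packing each subspace with exponentially many (not just $d_{\max}$) $\delta$-orthogonal vectors; that is precisely what produces the nontrivial per-subspace factor and makes the division sharp. Your primary recursion may well be salvageable, but it is considerably more work than the paper's two-line counting, and as you note the bookkeeping to land on the stated exponent is delicate.
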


We will first prove a lemma that will help us prove \cref{thm:packing_almost_ortho}.
\begin{lemma} 
\label{lem:norm_helper_lemma}
Pick $n$ pairwise $\delta$-orthogonal unit vectors in $\vv{1},\ldots,\vv{n} \in \mathbb{R}^d$. Let $\y \in \mathbb{R}^d$ be a unit norm vector that is a linear combination of unit norm vectors $\vv{1}, \ldots, \vv{n}$ with coefficients $z_1 \ldots, z_n \in \mathbb{R}$. We can write $\A = [\vv{1}, \ldots, \vv{n}]$ and $\z=[z_1, \ldots, z_n]^T$, so that we have $\y = \sum_{k = 1}^{n} z_{k} \vv{k} = \A \z^T$ with $\lVert \y \rVert_2 = 1$. Then, $$\left|\sum_{k=1}^{n} \z_{k} \right| = \lVert \z \rVert_1 \le \sqrt{\frac{n}{1 - \delta n}}$$

\end{lemma}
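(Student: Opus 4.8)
The plan is to work entirely with the Gram matrix of the $\vv{k}$'s and extract the bound from the identity $\lVert\y\rVert_2^2 = 1$. Write $\A = [\vv{1},\ldots,\vv{n}]$ so that $\y = \A\z$ (I will treat $\z$ as a column vector; the statement's $\A\z^T$ is the same object up to transpose bookkeeping), and let $G = \A^\top \A \in \mathbb{R}^{n\times n}$ be the Gram matrix. Since the $\vv{k}$ are unit vectors, $G_{kk} = 1$ for all $k$, and since they are pairwise $\delta$-orthogonal, $|G_{jk}| \le \delta$ for $j \ne k$. Decompose $G = I + E$, where $E$ has zero diagonal and all entries bounded by $\delta$ in absolute value.

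\textbf{Key steps.} First, expand the norm constraint:
\begin{align}
1 = \lVert \y \rVert_2^2 = \z^\top G \z = \lVert \z \rVert_2^2 + \z^\top E \z. \nonumber
\end{align}
Second, bound the cross term using the entrywise bound on $E$:
\begin{align}
|\z^\top E \z| = \Bigl| \sum_{j \ne k} z_j z_k E_{jk} \Bigr| \le \delta \sum_{j \ne k} |z_j||z_k| \le \delta \Bigl( \sum_k |z_k| \Bigr)^2 = \delta \lVert \z \rVert_1^2. \nonumber
\end{align}
Third, combine these to get $1 \ge \lVert \z \rVert_2^2 - \delta \lVert \z \rVert_1^2$, and then invoke Cauchy--Schwarz in the form $\lVert \z \rVert_1^2 \le n \lVert \z \rVert_2^2$, i.e. $\lVert \z \rVert_2^2 \ge \lVert \z \rVert_1^2 / n$, to obtain
\begin{align}
1 \ge \frac{\lVert \z \rVert_1^2}{n} - \delta \lVert \z \rVert_1^2 = \lVert \z \rVert_1^2 \cdot \frac{1 - \delta n}{n}. \nonumber
\end{align}
Rearranging (valid since we are in the regime $\delta n < 1$) gives $\lVert \z \rVert_1^2 \le \frac{n}{1 - \delta n}$, hence $\lVert \z \rVert_1 \le \sqrt{\tfrac{n}{1-\delta n}}$. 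Finally, $\bigl|\sum_k z_k\bigr| \le \sum_k |z_k| = \lVert \z \rVert_1$ by the triangle inequality, which finishes the chain of (in)equalities in the statement.

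\textbf{Main obstacle.} There is no deep obstacle here; the only substantive step is bounding the off-diagonal quadratic form $\z^\top E \z$ by $\delta \lVert\z\rVert_1^2$ and recognizing that pairing it with Cauchy--Schwarz ($\lVert\z\rVert_1^2 \le n\lVert\z\rVert_2^2$) is exactly what converts the unit-norm constraint into an upper bound on $\lVert\z\rVert_1$. The one thing to be careful about is the implicit hypothesis $\delta n < 1$: if $\delta n \ge 1$ the claimed bound is vacuous or ill-defined, so I would either state $\delta n < 1$ as a standing assumption or note that the inequality is trivially true otherwise. This is the estimate that will be fed into the packing argument for \cref{thm:packing_almost_ortho}, where $\y$ will be a shared unit vector lying in the column spaces of two candidate matrices.
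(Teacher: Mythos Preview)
Your proof is correct. It takes a somewhat different route from the paper's, though both rest on the same two ingredients: the Gram matrix $G=\A^\top\A$ and the Cauchy--Schwarz step $\lVert\z\rVert_1\le\sqrt{n}\,\lVert\z\rVert_2$. The paper first bounds $\lVert\z\rVert_2$ via the singular-value inequality $\lVert\z\rVert_2\le 1/\sigma_n(\A)$ and then lower-bounds $\sigma_n^2=\lambda_{\min}(G)$ by $1-\delta(n-1)$ using the Gershgorin circle theorem, obtaining the slightly sharper intermediate estimate $\lVert\z\rVert_1\le\sqrt{n/(1-\delta(n-1))}$ before relaxing to the stated bound. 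You instead expand $1=\z^\top G\z=\lVert\z\rVert_2^2+\z^\top E\z$ and bound the off-diagonal quadratic form entrywise by $\delta\lVert\z\rVert_1^2$, which sidesteps any appeal to eigenvalue localization or singular-value machinery. Your argument is more elementary and self-contained; the paper's gives a marginally tighter constant in the denominator (before the final relaxation) and makes the role of the conditioning of $\A$ explicit. Both require the regime $\delta n<1$ (equivalently $\delta(n-1)<1$) for the bound to be meaningful, as you note.

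One small aside: your closing remark about $\y$ being ``a shared unit vector lying in the column spaces of two candidate matrices'' is not quite how the lemma is invoked in the proof of \cref{thm:packing_almost_ortho}; there, the lemma is applied separately to each of $\y_i\in\mathrm{colspace}(\A_i)$ and $\y_j\in\mathrm{colspace}(\A_j)$ to control the $\ell_1$ norms of their respective coefficient vectors before bounding $|\y_i\cdot\y_j|$. This does not affect the validity of your proof of the lemma itself.
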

\begin{proof}
We will first bound the $L_2$ norm of $\z$. If $\sigma_{n}$ is the minimum singular value of $\A$, then we have via standard singular value inequalities~\citep{high21s}
    $$\sigma_{n} \le \frac{\lVert \y \rVert_2}{\lVert \z \rVert_2} \implies \lVert \z \rVert_2 \le \frac{\lVert \y \rVert_2}{\sigma_{n}} = \frac{1}{\sigma_{n}} $$

Thus we now lower bound $\sigma_{n}$. The singular values are the square roots of the eigenvalues of the matrix $\A^T\A$, so we now examine $\A^T\A$. Since all elements of $\A$ are unit vectors, the diagonal of $\A^T\A$ is all ones. The off diagonal elements are dot products of pairs of $\delta$-orthogonal vectors, and so are within the range $[-\delta, \delta]$. Then by the Gershgorin circle theorem~\citep{gershgorin1931uber},
all eigenvalues $\lambda_i$ of $\A^T\A$ are in the range
$$(1 - \delta(n - 1), 1 + \delta(n - 1))$$
In particular, $\sigma_{n}^2 = \lambda_{n} \ge 1 - \delta(n - 1)$, and thus $\sigma_{n} \ge \sqrt{1 - \delta(n - 1)}$. Plugging into our upper bound for $\lVert \z\rVert_2$, we have that $\lVert \z \rVert_2 \le 1/\sqrt{1 - \delta(n - 1)}$. Finally, the largest $L_1$ for a point on an $n$-hypersphere of radius $r$ is when all dimensions are equal and such a point has magnitude $\sqrt{n}r$, so
$$\lVert \z \rVert_1 \le \sqrt{\frac{n}{1 - \delta(n - 1)}} \le \sqrt{\frac{n}{1 - \delta n}}$$
\end{proof}

\packingalmostorthogonal*
\begin{proof}
By the JL lemma~\citep{Johnson1984, bill_johnson_mathoverflow}, for any $d$ and $\delta$, we can choose $e^{C d \delta^2}$ $\delta$-orthogonal unit vectors in $\mathbb{R}^d$ indexed as $\vv{i}$, for some constant $C$. Let $\A_i = [\vv{d_{\max} * i}, \ldots, \vv{d_{\max} * i + n_i - 1}]$ where each element in the brackets is a column. Then by construction all $\A_i$ are matrices composed of unique $\delta$-orthogonal vectors and there are $\frac{1}{d_{\max}}e^{C d \delta^2}$ matrices $\A_i$.

Now, consider two of these matrices $\A_i = [\vv{1}, \ldots, \vv{n_i}]$ and $\A_j = [\uu_1, \ldots, \uu_{n_j}]$, $i \ne j$; we will prove that they are $f(\delta)$-orthogonal for some function $f$. Let $\y_i = \sum_{k = 1}^{n_i} z_{i, k} \vv{k} $ be a vector in the colspace of $\A_i$ and $\y_j= \sum_{k = 1}^{n_j} z_{j, k} \uu_k$ be a vector in the colspace of $\A_j$, such that $\y_i$ and $\y_j$ are unit vectors. To prove $f(\delta)$-orthogonality, we must bound the absolute dot product between $\y_i$ and $\y_j$: 

\begin{align*}
    \left| \y_i \cdot \y_j \right| &= \left| \left( \sum_{k = 1}^{n_i} z_{i, k} \vv{k} \right) \cdot \left( \sum_{k = 1}^{n_j} z_{j, k} \uu_k \right) \right| \\
    &= \left| \sum_{k_1=1}^{n_i} \sum_{k_2=1}^{n_j} \left( z_{i, k_1} \vv{k_1} \right) \cdot \left( z_{j, k_2} \uu_{k_2} \right) \right| \\
    &\le \sum_{k_1=1}^{n_i} \sum_{k_2=1}^{n_j} \left| z_{i, k_1} z_{j, k_2} \right| \left| \vv{k_1} \cdot \uu_{k_2} \right| &\text{Triangle Inequality} \\
    &\le \sum_{k_1=1}^{n_i} \sum_{k_2=1}^{n_j} \left| z_{i, k_1} z_{j, k_2} \right| \delta &\text{All $\vv{i}, \uu_j$ are $\delta$ orthogonal} \\
    &= \delta \sum_{k_1=1}^{n_i} \sum_{k_2=1}^{n_j} \left| z_{i, k_1} z_{j, k_2} \right| \\
    &= \delta \left| \sum_{k=1}^{n_i} z_{i, k} \right| \left| \sum_{k=1}^{n_j} z_{j, k} \right| &\text{Factoring the product} \\
    &\le \delta \sqrt{\frac{n_i}{1 - \delta n_i}} \sqrt{\frac{n_j}{1 - \delta n_j}} &\text{By \cref{lem:norm_helper_lemma}} \\
    &\le \frac{\delta d_{\max}}{1 - \delta d_{\max}} &\text{$n_i, n_j \le d_{\max}$ by assumption}
\end{align*}

Thus $\A_i$ and $\A_j$ are $f(\delta)$-orthogonal for $f(\delta)=\delta d_{\max} / (1-\delta d_{\max})$, and so it is possible to choose $\frac{1}{d_{\max}}e^{C d \delta^2}$ pairwise $f(\delta)$-orthogonal projection matrices. Remapping the variable $\delta$ with $\delta \mapsto f^{-1}(\delta) = \delta/(d_{\max}(1+\delta))$, we find that it is possible to choose $\frac{1}{d_{\max}}e^{C d \delta^2/((1+\delta)^2d_{\max}^2)}$ pairwise $\delta$-orthogonal projection matrices. Because $1+
\delta$ is at most $2$ with $\delta \in (0, 1)$, we can further simplify the exponent and find that it is possible to choose $\frac{1}{d_{\max}}e^{C (d/d_{\max}^2) \delta^2/4}$ pairwise $\delta$-orthogonal projection matrices. Absorbing the $4$ into the constant $C$ finishes the proof of the lower bound.
\\\\
For the upper bound, we can proceed much more simply. Consider $k$ pairwise $\delta$-orthogonal matrices $A_i \in \mathbb{R}^{d'}$. Since these matrices are full rank, their column spaces each parameterize a subspace of dimension $d'$, and so by a result from \citep{alon2003problems} it is possible to choose $e^{Cd’ \delta^2 \log(\frac{1}{\delta})}$ almost orthogonal vectors in this subspace. Furthermore, by our definition of $\delta$-orthogonal matrices, all pairs of these vectors between subspaces will be $\delta$-orthogonal. Finally, again by~\citep{alon2003problems} we cannot have more than $e^{Cd\delta^2\log(\frac{1}{\delta})}$ $\delta$-orthogonal vectors overall, so we have that
\begin{align*}
k e^{C d_{\max} \delta^2 \log(\frac{1}{\delta})} &< e^{Cd\delta^2\log(\frac{1}{\delta})}
\end{align*}
and simplfying,
\begin{align*}
k &< e^{C(d - d_{\max})\delta^2\log(\frac{1}{\delta})} 
\end{align*}

\end{proof}

These results imply that models can still represent an exponential number of higher dimensional features. However, there is a large exponential gap between the lower and upper bound we have shown. If the lower bound is reasonably tight, then this would mean that models would be highly incentivized to fit features within the smallest dimensional space possible, suggesting a reason for recent work showing interesting compressed encodings of multi-dimensional features in toy problems~\citep{circles_theory}.

Note that the proof assumes the ``worst case'' scenario that all of the features are dimension $d_{\max}$, while in practice many of the features may be $1$ or low dimensional, so the effect on the capacity of a real model that represents multi-dimensional features is unlikely to be this extreme. 

Finally, we note that the dictionary learning literature may have discovered similar results in the past (which we were unaware of), see \cite{foucart2013mathematical}.

    





\section{More on Reducibility}
\label{appendix:more_on_reducibility}

\subsection{Additional Intuition for Definitions}

Here, we present some extra intuition and high level ideas for understanding our definitions and the motivation behind them. Roughly, we intend for our definitions in the main text to identify representations in the model that describe an object or concept in a way that fundamentally takes multiple dimensions. We operationalize this as finding a subspace of representations that 1. has basis vectors that ``always co-occur'' no matter the orientation 2. is not made up of combinations of independent lower-dimensional features. 

1. The first condition is met by the mixture part of our definition. The feature in question should  be part of an irreducible manifold, and so should ``fill'' a plane or hyperplane. There shouldn't be any part of the plane where the probability distribution of the feature is concentrated, because this region is then likely part of a lower dimensional feature. The idea of this part of the definition is to capture multi-dimensional objects; if the entire multi-dimensional space is truly being used to represent a high-dimensional object, then the representations for the object should be ``spread out'' entirely through the space. 

2. The second condition is met by the separability part of our definition. This part of the definition is intended to rule out features that co-occur frequently but are fundamentally not describing the same object or concept. For example, latitude and longitude are not a mixture in that they frequently co-occur, but we do not think it is necessarily correct to say they are part of the same multi-dimensional feature because they are independent. 

\subsection{Empirical Irreducible Feature Test Details}
\label{sec:empirical_test_details}

Our tests for reducibility require the computation of two quantities $S(\f)$ for the separability index and $M_\epsilon(\f)$ for the $\epsilon$-mixture index. We describe how we compute each index in the following two subsections.

\subsubsection{Separability Index}
We define the separability index in Equation \ref{eqn:sep_index} as
\begin{align*}
S(\f) =& \min I(\a; \b)
\end{align*}
where the min is over rotations $\mathbf{R}$ used to split $\f' = \mathbf{R}\f + \c$ into $\a$ and $\b$. In two dimensions, the rotation is defined by a single angle, so we can iterate over a grid of 1000 angles and estimate the mutual information between $\a$ and $\b$ for each angle. We first normalize $\f$ by subtracting off the mean and then dividing by the root mean squared norm of $\f$ (and multiplying by $\sqrt{2}$ since the toy datasets are in two dimensions). To estimate the mutual information, we first clip the data $\f$ to a 6 by 6 square centered on the origin. We then bin the points into a 40 by 40 grid, to produce a discrete distribution $p(a,b)$. After computing the marginals $p(a)$ and $p(b)$ by summing the distribution over each axis, we obtain the mutual information via the formula
\begin{align}
I(\a; \b) = \sum_{a,b} p(a,b) \log \frac{p(a,b)}{p(a)p(b)}
\end{align}

\subsubsection{$\epsilon$-Mixture Index}
We define the $\epsilon$-mixture index in Equation  \ref{eqn:mixture_index} as
\begin{align*}
        M_\epsilon(\f) =& \max_{\justvv \in \mathbb{R}^{d_f},\; c \in \mathbb{R}} \mathbb{P}\left(|\justvv\cdot \f + c| < \epsilon\sqrt{\mathbb{E}[(\justvv\cdot \f + c)^2]}\right)
\end{align*}
The challenge with computing $M_\epsilon(\f)$ is to compute the maximum. We opted to maximize via gradient descent; and we guaranteed differentiability by softening the inequality $<$ with a sigmoid,
\begin{align}
        M_{\epsilon,T}(\f, \justvv, c) =& \mathbb{E}\left(\sigma\left(\frac{1}{T}\left(\epsilon - \frac{|\justvv\cdot \f + c|}{\sqrt{\mathbb{E}[(\justvv\cdot \f + c)^2]}}\right)\right)\right)
\end{align}
where $T$ is a temperature, which we linearly decay from $1$ to $0$ throughout training. We optimize for $\justvv$ and $c$ using this loss $M_{\epsilon,T}(\f, \justvv, c)$ using full batch gradient descent over 10000 steps with learning rate $0.1$. With the solution $(\justvv^*, c^*)$, the final value of $M_{\epsilon,T=0}(\f, \justvv^*, c^*)$ is then our estimate of $M_\epsilon(\f)$.

We also run the irreducibility tests on additional synthetic feature distributions in \cref{fig:reducibility_2} and \cref{fig:reducibility_4}.

\begin{figure}
    \centering
    \subfloat[Testing $S(\a)$ and $M_\epsilon(\a)$ on a reducible feature $\a$.]{%
        \includegraphics[width=\textwidth, trim=0cm 0cm 0cm 0cm,clip]{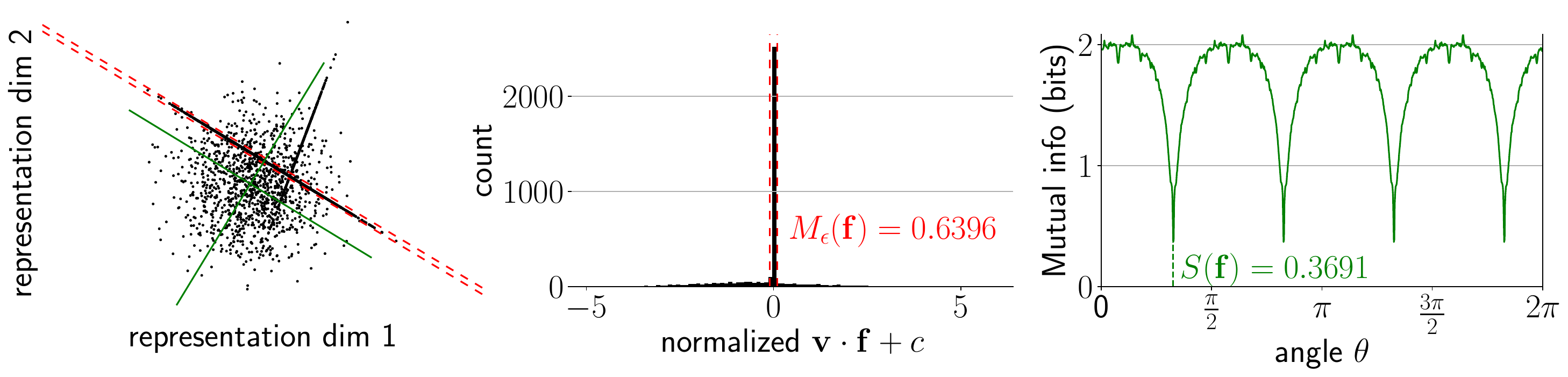}%
        \label{fig:reducibility_1}
    } \\
    \subfloat[Testing $S(\b)$ and $M_\epsilon(\b)$ on an irreducible feature $\b$]{%
        \includegraphics[width=\textwidth, trim=0cm 0cm 0cm 0cm,clip]{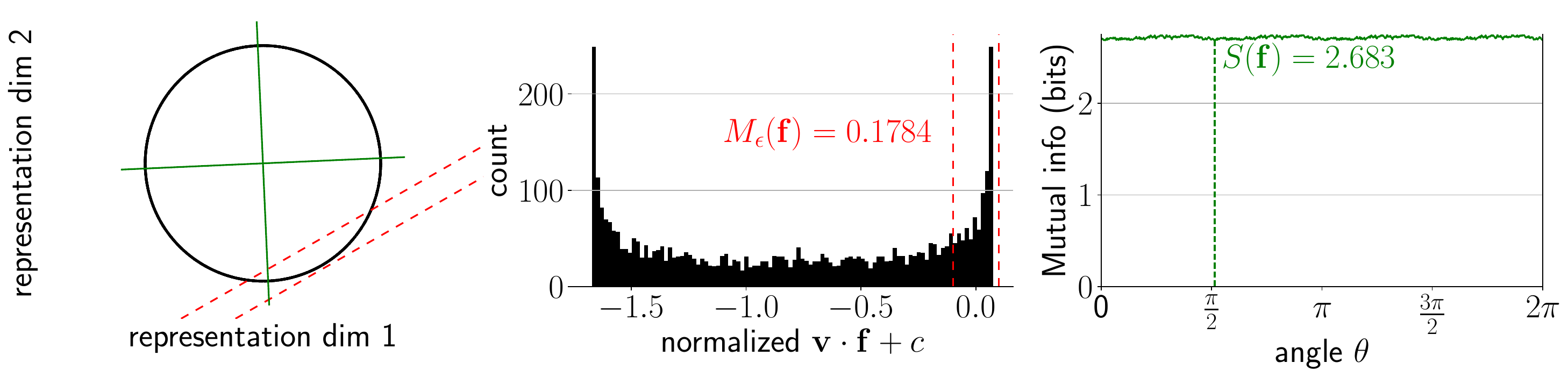}%
        \label{fig:reducibility_3}
    }
    \caption{Testing irreducibility of synthetic features. \textbf{Left in each subfigure:} Distributions of $\x$. For feature $\a$, \revThree{63.96\%} lies within the narrow dotted lines, indicating the feature is likely a mixture. For feature $\b$, 17.84\% lies within the wide lines, indicating the feature is unlikely to be a mixture. The green cross indicates the angle $\theta$ that minimizes mutual information. \textbf{Middle in each subfigure:} Histograms of the distribution of $\justvv \cdot \x$ with red lines indicating a $2\epsilon$-wide region. \textbf{Right in each subfigure:} Mutual information between $\mathbf{a}$ and $\mathbf{b}$ as a function of the rotation angle $\theta$ of matrix $\mathbf{R}$. Feature $\b$ has a large minimum mutual information so is unlikely to be separable; feature $\a$ has a medium value of minimum mutual information of about $0.37$ bits.}
    \label{fig:combined_reducibility}
\end{figure}

\begin{figure}
    \centering
    \subfloat[Testing $S(\c)$ and $M_\epsilon(\c)$ on a reducible feature $\c$.]{%
        \includegraphics[width=\textwidth, trim=0cm 0cm 0cm 0cm,clip]{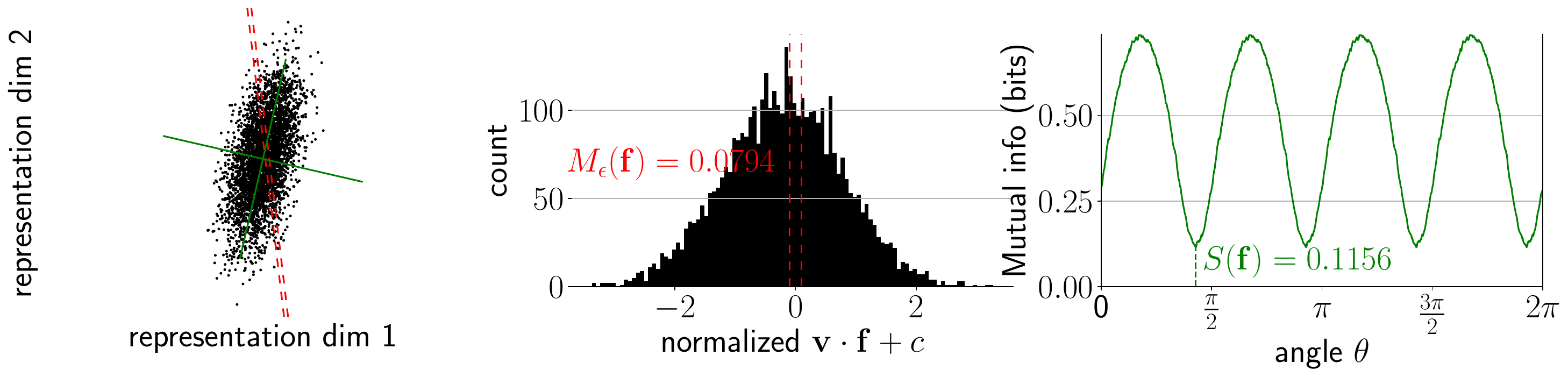}%
        \label{fig:reducibility_2}
    } \\
    \subfloat[Testing $S(\mathbf{d})$ and $M_\epsilon(\mathbf{d})$ on an irreducible feature $\mathbf{d}$]{%
        \includegraphics[width=\textwidth, trim=0cm 0cm 0cm 0cm,clip]{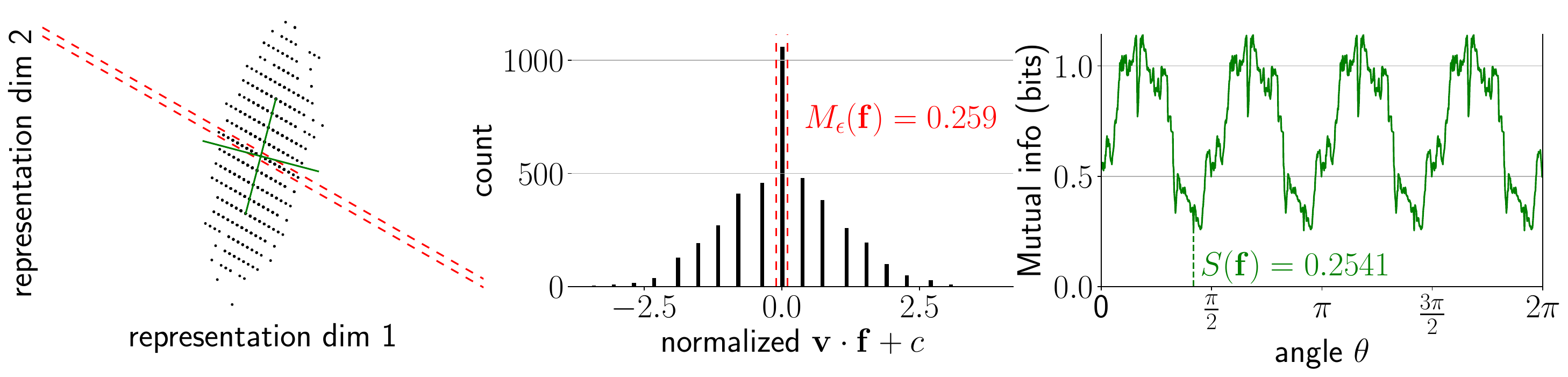}%
        \label{fig:reducibility_4}
    }
    \caption{Testing irreducibility of synthetic features. \textbf{Left in each subfigure:} Distributions of $\x$. For feature $\c$, 7.94\% lies within the narrow dotted lines, indicating the feature is unlikely to be a mixture. For feature $\mathbf{d}$, 25.90\% lies within the wide lines, indicating the feature is likely a mixture. The green cross indicates the angle $\theta$ that minimizes mutual information. \textbf{Middle in each subfigure:} Histograms of the distribution of $\justvv \cdot \x$ with red lines indicating a $2\epsilon$-wide region. \textbf{Right in each subfigure:} Mutual information between $\mathbf{a}$ and $\mathbf{b}$ as a function of the rotation angle $\theta$ of matrix $\mathbf{R}$. Both features have a small ($< 0.5$ bits) minimum mutual information and so are likely separable.}
    \label{fig:combined_reducibility_appendix}
\end{figure}




\section{Alternative Definitions}
\label{appenxix:alt_definitions}

In this section, we present an alternative definition of a reducible feature that we considered during our work. This chiefly deals with multi-dimensional features from the angle of \textit{computational} reducibility as opposed to \textit{statistical} reducibility. In other words, this definition considers whether representations of features on a specific set of tasks can be split up without changing the accuracy of the task. This captures an interesting (and important) aspect of feature reducibility, but because it requires a specific set of prompts (as opposed to allowing unsupervised discovery) we chose not to use it as our main definition.

Our alternative definitions consider \textit{representation spaces} that are possibly multi-dimensional, and defines these spaces through whether they can completely explain a function $h$ on the output logits. We consider a \textit{group theoretic} approach to irreducible representations, via whether computation involving multiple group elements can be decomposed.

\subsection{Alternative Definition: Interventions and Representation Spaces}

Assume that we restrict the input set of prompts $T = \{\mathbf{t}^j\}$ to some subset of prompts and that we have some evaluation function $h$ that maps from the output logit distribution of $M$ to a real number. For example, for the $\texttt{Weekdays}$ problems, $T$ is the set of $49$ prompts and $h$ could be the $\argmax$ over the days of week logits. Abusing notation, we let $M$ also be the function from the layer we are intervening on; this is always clear from context. Then we can define a representation space of $\x^j_{i,l}$ as a subspace in which interventions always work: 
\begin{definition}[Representation Space]
\label{def:appendix_representation_space}
Given a prompt set  $T = \{\mathbf{t}^j\}$, a rank-$r$ dimensional \textit{representation space} of intermediate value $\x^j_{i,l}$ is a rank $r$ projection matrix $P$ such that for all $j, j'$, $h(M((I-P)\x^j_{i,l} + P\x^{j'}_{i,l}))=h(M(\x^{j'}_{i,l}))$.
\end{definition}

Note that it immediately follows that the rank $d$ dimensional matrix $I_d$ is trivially a rank $d$ representation space for all prompt sets $T$.

\begin{definition}[Minimality]
A representation space $P$ of rank $r$ is \textit{minimal} if there does not exist a lower rank representation space.
\end{definition}

A minimal representation with rank > 1 is a \textit{multi-dimensional representation}.

\begin{definition}[Alternative Reducibility]
\label{def:app_reducibility_1}
A representation space  $P$ of rank $r$ is \textit{reducible} if there are orthonormal representation spaces $P_1$ and $P_2$ (such that $P_1+P_2=P$,  $P_1P_2=0$) where $$h(M(P_1\x_{i,l}^j) + M(P_2\x_{i,l}^{j})) = h(M(P_1\x_{i,l}^j + P_2\x_{i,l}^{j}))$$ for all $j, j'$.
\end{definition}

Suppose $T$, $h$ and $M$ define the multiplication of two elements in a finite group $G$ of order $n$. Then if we interpret the embedding vectors as the group representations, our definition of reducibility implies to the standard group-theoretical definition of irreducibility –– specifically, reducibility into a tensor product representation.

\section{Toy Case of Training SAEs on Circles}
\label{appendix:toy}

To explore how SAEs behave when reconstructing irreducible features of dimension $d_f > 1$, we perform experiments with the following toy setup. Inspired by the circular representations of integers that networks learn when trained on modular addition~\citep{neel_grokking, ziming_grokking}, we create synthetic datasets of activations containing multiple features which are each 2d irreducible circles. 

First however, consider activations for a single circle -- points uniformly distributed on the unit circle in $\mathbb{R}^2$. We train SAEs on this data with encoder $\texttt{Enc}(\x) = \texttt{ReLU}(\mathbf{W_e} (\x - \mathbf{b_d}) + \mathbf{b_e}) $ and decoder $\texttt{Dec}(\mathbf{f}) = \mathbf{W_d} \mathbf{f} + \mathbf{b_d}$. We train SAEs with $m=2$ and $m=10$ with the Adam optimizer and a learning rate of $10^{-3}$, sparsity penalty $\lambda = 0.1$, for 20,000 steps, and a warmup of 1000 steps. In~\cref{fig:toysaes-singlecircle} we show the dictionary elements of these SAEs. When $m=2$, the SAE must use both SAE features on each input point, and uses $\mathbf{d_b}$ to shift the reconstructed circle so it is centered at the origin. When $m=10$, the SAE learns $\mathbf{d_b} \approx 0$ and the features spread out across the circle, arranged close together, and only a subset are active on each input.

\begin{figure}[h!]
    \centering
    \includegraphics{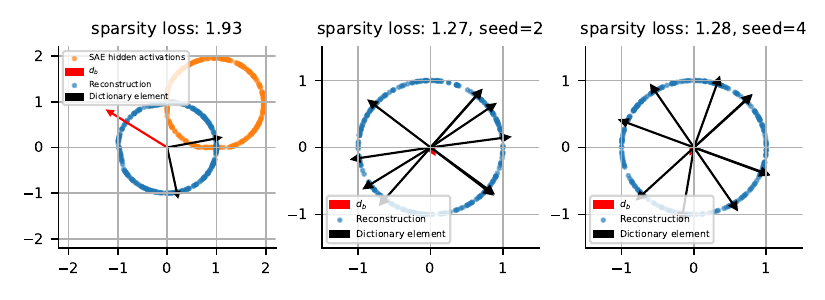}
    \caption{SAEs trained to reconstruct a single 2d circle with $m=2$ (left) and $m=10$ (middle and right) dictionary elements. When there are several SAE features, there is no natural/canonical choice of feature directions, and the dictionary elements spread out across the circle.}
    \label{fig:toysaes-singlecircle}
\end{figure}

We now consider synthetic activations with multiple circular features. Our data consists of points in $\mathbb{R}^{10}$, where we choose two orthogonal planes spanned by $(\mathbf{e_1}, \mathbf{e_2})$ and $(\mathbf{e_3}, \mathbf{e_4})$, respectively. With probability one half a points is sampled uniformly on the unit circle in the $\mathbf{e_1}$-$\mathbf{e_2}$ plane, otherwise the point will be sampled uniformly on the unit circle in the $\mathbf{e_3}$-$\mathbf{e_4}$ plane. We train SAEs with $m=64$ on this data with the same hyperparameters as the single-circle case. 

We now apply the procedure described in~\cref{sec:sae_search} to see if we can automatically rediscover these circles. Encouragingly, we first find that the alive SAE features align almost exactly with either the $\mathbf{e_1}$-$\mathbf{e_2}$ or the $\mathbf{e_3}$-$\mathbf{e_4}$ plane. When we apply spectral clustering with $\texttt{n\_clusters}=2$ to the features with the pairwise angular similarities between dictionary elements as the similarity matrix (\cref{fig:toysaes-twocircles}, left), the two clusters correspond exactly to the features which span each plane. As described in~\cref{sec:sae_search}, given a cluster of dictionary elements $S \subset \{1, \ldots, m\}$, we run a large set of activations through the SAE, then filter out samples which don't activate any element in $S$. For samples which do activate an element of $S$, reconstruct the activation while setting all SAE features not in $S$ to have a hidden activation of zero. If some collection of SAE features together represent some irreducible feature, we want to remove all other features from the activation vector, and so we only allow SAE features in the collection to participate in reconstructing the input activation. We find that this procedure almost exactly recovers the original two circles, which encouraged us to apply this method for discovering the features shown in~\cref{fig:gpt2-nonlinears} and~\cref{fig:mistral-cluster-reconstructions}.

\begin{figure}[h!]
    \centering
    \includegraphics{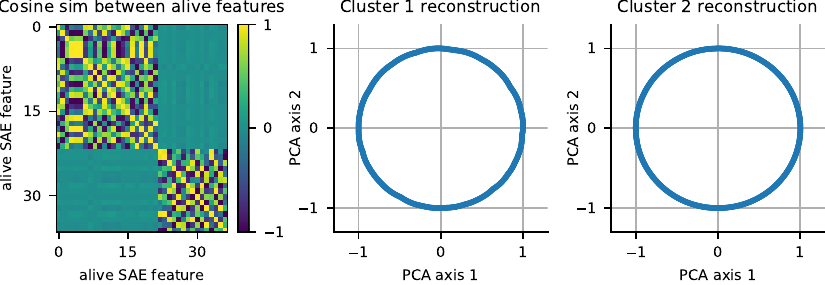}
    \caption{Automatic discovery of synthetic circular features by clustering SAE dictionary elements.}
    \label{fig:toysaes-twocircles}
\end{figure}

\section{Training Mistral SAEs}
\label{app:mistral_saes}
Our Mistral 7B \citep{mistral_7b} sparse autoencoders (SAEs) are trained on over one billion tokens from a subset of the Pile \citep{gao2020pile} and Alpaca \citep{peng2023instruction} datasets. We train our SAEs on layers 8, 16, and 24 out of 32 total layers to maximize coverage of the model's representations. We use a $16\times$ expansion factor, yielding a total of 65536 dictionary elements for each SAE.

To train our SAEs, we use an $L_p$ sparsity penalty for $p=1/2$ with sparsity coefficient $\lambda=0.012$. Before an SAE forward pass, we normalize our activation vectors to have norm $\sqrt{d_{model}} = 64$ in the case of Mistral. We do not apply a pre-encoder bias. We use an AdamW optimizer with weight decay $10^{-3}$ and learning rate 0.0002 with a linear warm up. We apply dead feature resampling \citep{dictionary_monosemanticity_anthropic} five times over the course of training to converge on SAEs with around 1000 dead features.


\section{GPT-2 and Mistral 7B Dictionary Element Clustering}
\label{app:gpt2_and_mistral_sae_details}
\revFour{In this section, we first present pseudocode in \cref{alg:clustering} for the overall high level technique that finds multi-dimensional features and that uses clustering as a subroutine. We then provide the specific clustering algorithm implementations we use for GPT-2 and Mistral.}

\begin{algorithm}[H]
\label{alg:clustering}
\SetAlgoLined
\KwIn{Dictionary elements $D$, activation vectors $X_{i,l}$, \texttt{SAE}}
\KwOut{Irreducible multi-dimensional features}

\SetKwFunction{Cluster}{Cluster}
\SetKwFunction{CosineSim}{CosineSim}
\SetKwFunction{SAE}{SAE}
\SetKwFunction{PCA}{PCA}
\SetKwFunction{TestIrreducible}{TestIrreducible}

$S_{i, j} \gets \CosineSim(D_i, D_j)$\;
$clusters \gets$ \Cluster{$S$}\;

$reconstructions \gets \{\}$\;
\For{$cluster$ in $clusters$}{
    $R_{cluster} \gets$ ids of dictionary elements in $cluster$\;
    \For{$\vx_{i,l}$ in $X_{i,l}$}{
        $encoding \gets \texttt{ReLU}(E \cdot \vx_{i, l})$\;
        \If{$\max (encoding[R_{cluster}]) > 0$}{
            $r \gets D[:, R_{cluster}] \cdot encoding$\;
            $reconstructions \gets reconstructions \cup \{r\}$\;
            }
    }
}

$features \gets \{\}$\;
\For{$R$ in $reconstructions$}{
    $proj \gets$ \PCA{$R$}\;
    \If{\TestIrreducible{$proj$}}{
        Add $proj$ to $features$\;
    }
}
\Return{$features$}

\caption{High Level Clustering Approach For Finding Multi-D Features}
\end{algorithm}

\subsection{GPT-2-small methods and results}
\label{app:gpt2-additional}

For GPT-2-small, we perform spectral clustering on the roughly 25k layer 7 SAE features from~\citep{bloom2024gpt2residualsaes}, using pairwise angular similarities between dictionary elements as the similarity matrix. We use $\texttt{n\_clusters} = 1000$ and manually looked at roughly 500 of these clusters. For each cluster, we looked at projections onto principal components 1-4 of the reconstructed activations for these clusters. In~\cref{fig:gpt2-3projectionsperrep}, we show projections for the most interesting clusters we identified, which appear to be circular representations of days of the week, months of the year, and years of the 20th century.

\begin{figure}[h]
    \centering
    \includegraphics{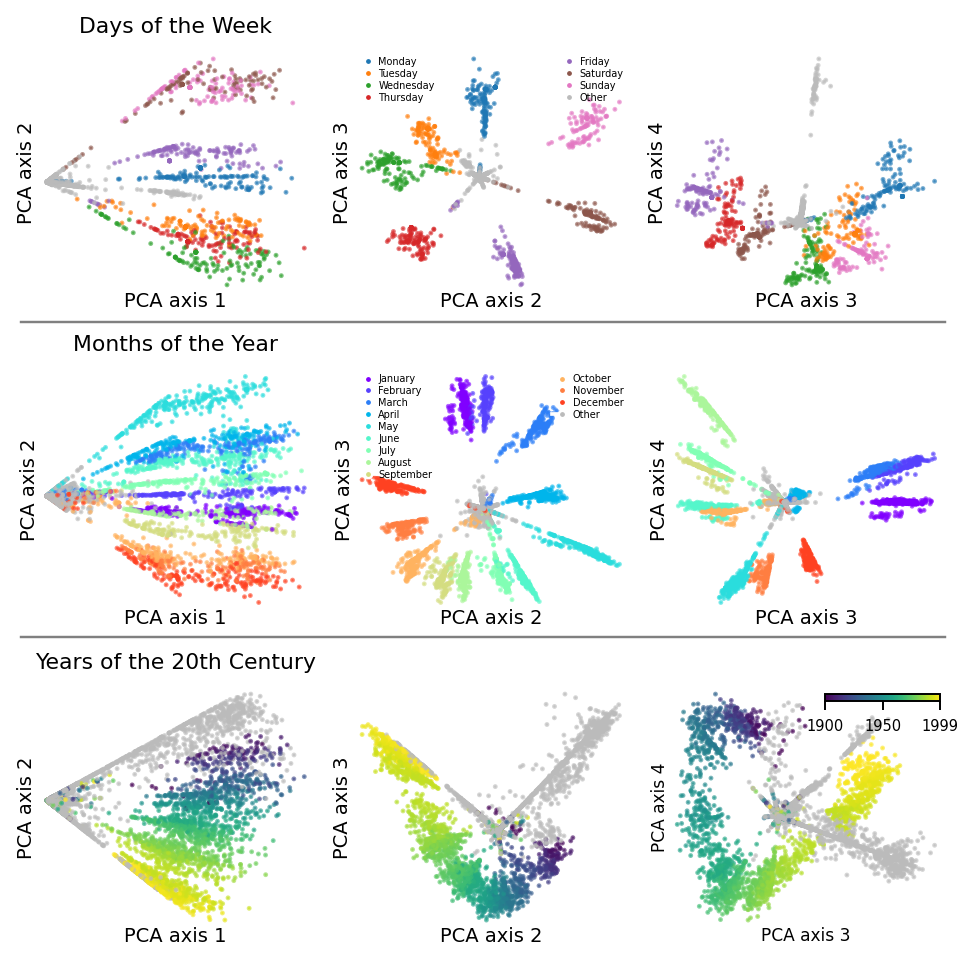}
    \caption{Projections of days of week, months of year, and years of the 20th century representations onto top four principal components, showing additional dimensions of the representations than \cref{fig:gpt2-nonlinears}.}
    \label{fig:gpt2-3projectionsperrep}
\end{figure}

\subsection{Mistral 7B methods and results}
\label{sec:clustering_details}

For Mistral 7B, our SAEs have 65536 dictionary elements and we found it difficult to run spectral clustering on all of these at once.
We therefore develop a simple graph based clustering algorithm that we run on Mistral 7B SAEs:
\begin{enumerate}
    \item Create a graph $G$ out of the dictionary elements by adding directed edges from each dictionary element to its $k$ closest dictionary elements by cosine similarity. We use $k = 2$.
    \item Make the graph undirected by turning every directed edge into an undirected edge.
    \item Prune edges with cosine similarity less than a threshold value $\tau$. We use $\tau = 0.5$.
    \item Return the connected components as clusters.
\end{enumerate}

We run this algorithm on the Mistral 7B layer $8$ SAE ($2^{16}$ dictionary elements) and find roughly $2700$ clusters containing between $2$ and $1000$ elements. We manually inspected roughly 2000 of these. From these, we re-discover circular representations of days of the week and months of the year, shown in~\cref{fig:mistral-cluster-reconstructions}. However, we did not find other obviously interesting and clearly irreducible features.





\revTwo{We also investigate the sensitivity of this method to $\tau$ and $k$ by varying $\tau$ and $k$ and showing the max Jaccard similarity between any of the resulting clusters and the days of the week cluster we show in \cref{fig:mistral-cluster-reconstructions}. We show the results in \cref{fig:cluster-stability}, where we find that varying $k$ has minimal effect, while varying $\tau$ shows $3$ regimes: small $\tau$ causes all features to group in one cluster, so the days of the week cluster is not found; medium $\tau$ causes the days of the week cluster to become identifiable; large $\tau$ causes all features to be divided into their own clusters. }

\begin{figure}[h]
    \centering
    \includegraphics{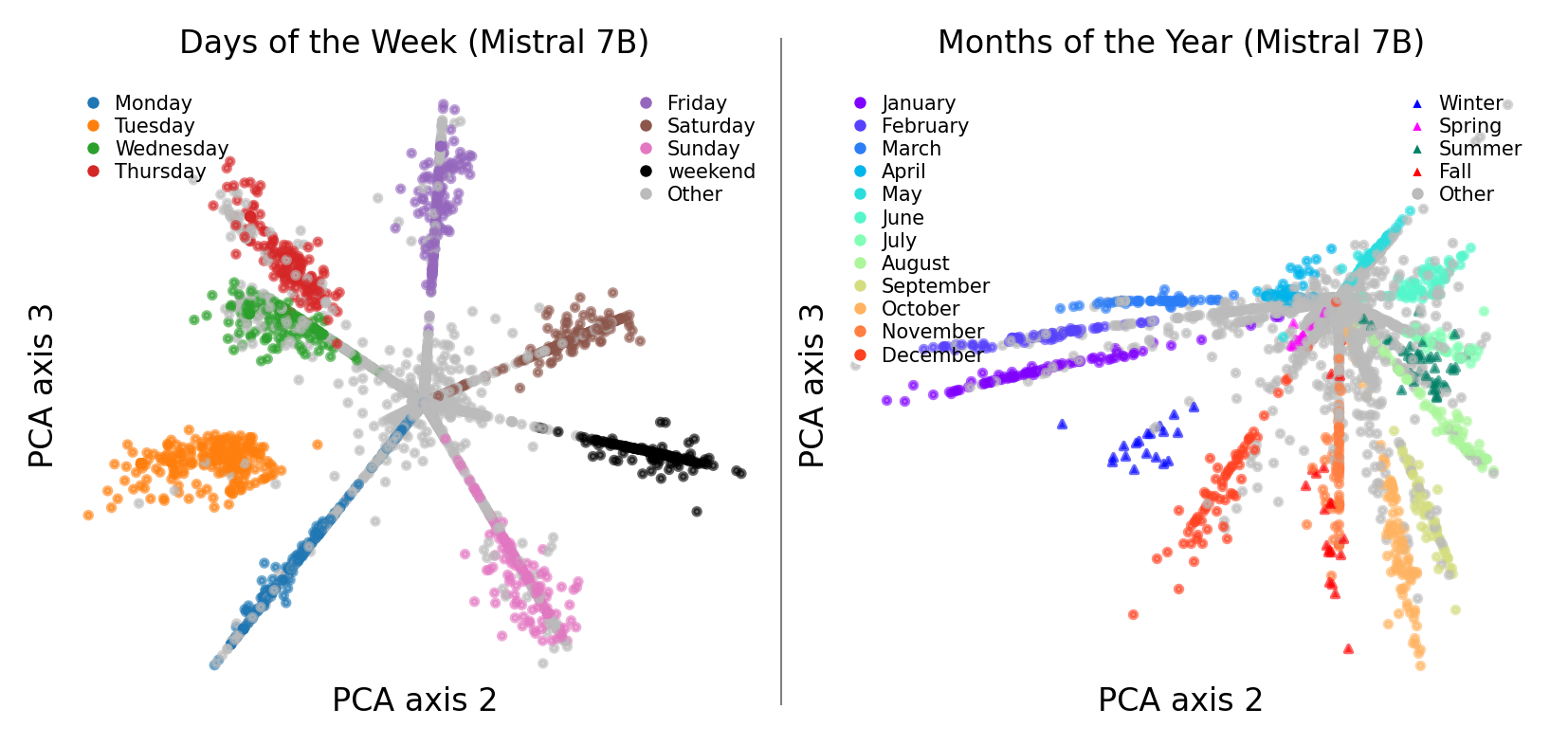}
    \caption{Circular representations of days of the week and months of the year which we discover with our unsupervised SAE clustering method in Mistral 7B. Unlike similar features in GPT-2, we also find an additional ``weekend'' representation in between Saturday and Sunday representations (left) and additional representations of seasons among the months (right). For instance, ``winter'' tokens activate a region of the circle in between the representation of January and December.}
    \label{fig:mistral-cluster-reconstructions}
\end{figure}

\begin{figure}[h]
    \centering
    \includegraphics[width=\textwidth]{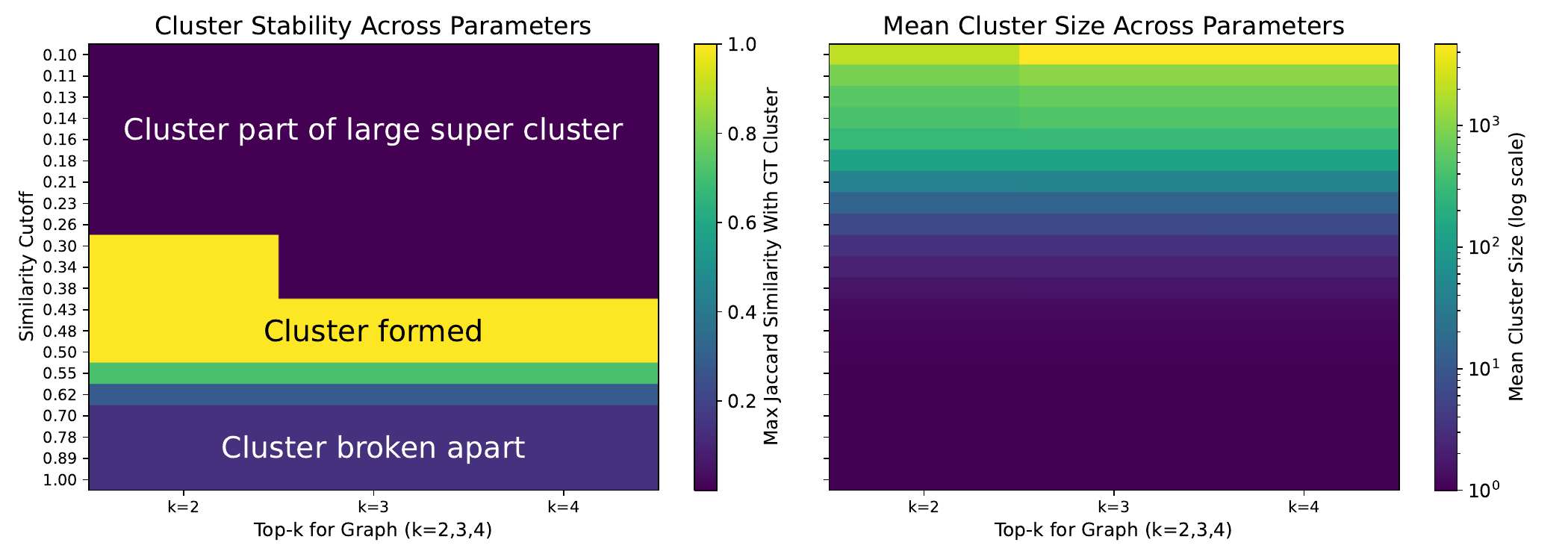}
    \caption{\revTwo{Hyperparameter regimes where the days of the week cluster exists. The cluster exists in the regime between all features clumping together and all features being in their own cluster; this regime seems reasonably stable.}}
    \label{fig:cluster-stability}
\end{figure}

As future work, we think it would be exciting to develop better clustering techniques for SAE features. Our graph based clustering technique could likely be improved by more recent efficient and high-quality graph based clustering techniques, e.g. hierarchical agglomerate clustering with single-linkage~\citep{lattanzi2020framework}. Additionally, we believe we would see a large improvement by setting edge weights to be a combination of both the cosine and Jaccard similarity of the dictionary elements, e.g. max(cosine, Jaccard).


\section{Other discovered clusters}
\label{app:other_clusters}

\revOne{In \cref{fig:all-clusters}, we plot the top $11$ ranked clusters by the product of a) the measured separability index and b) one minus the measured $\epsilon$-mixture index with $\epsilon = 0.1$ (this is just one of many possible ways to get an ordered ranking from a two-parameter score). We color by both the current token (which results in clear patterns for all tokens) and the next token (to see if we find belief states as found by \cite{belief_state_geoemtry} in toy transformers). We note that weekdays are ranked 9 and so are shown in the plot. Additionally, the next token patterns of the `such' cluster and the `B' cluster do seem to display some clustering independently of the the current token pattern, which might lend the belief state hypothesis some support.}

\section{Further Experiment Details}
\label{app:further_experiment_details}

\subsection{Assets Information}
\label{sec:assets_info}
We use the following open source models for our experiments: Llama 3 8B \citep{llama3modelcard} (custom Llama 3 license \url{https://llama.meta.com/llama3/license/}), Mistral 7B \citep{mistral_7b} (released under the Apache 2 License), and GPT-2 \citep{gpt_2} (modified MIT license, see \url{https://github.com/openai/gpt-2/blob/master/LICENSE}).

\subsection{Machine Information}
\label{sec:machine_info}

Intervention experiments were run on two V100 GPUs using less than $64$ GB of CPU RAM; all experiments can be reproduced from our open source repository in less than a day with this configuration.  We use the TransformerLens library~\citep{nanda2022transformerlens} for intervention experiments. $\epsilon$-mixture index measurements on toy datasets took about one minute each, on 8GB of CPU RAM. EVR experiments take seconds on 8GB of CPU RAM and are dominated by time taken to human-interpret the RGB plots.

GPT-2 SAE clustering and plotting was run on a cluster of heterogeneous hardware. Spectral clustering and computing reconstructions + plotting was done on CPUs only. We made reconstruction plots for 500 clusters, with each taking less than 10 minutes. Mistral 7B SAE reconstruction plots were made on the same cluster. We made roughly 2000 reconstruction plots for Mistral 7B (and manually inspected each), with each taking less than 20 minutes to generate. Jobs were allocated 64GB of memory each.

Mistral SAE training was run on a single V100 GPU. Initially caching activations from Mistral 7B on one billion tokens took approximately 60 hours. Training the SAEs on the saved activations took another 36 hours.

\subsection{Error Bar Calculation}
\label{sec:ci_calculation}

In \cref{fig:all_interventions} we report 96\% error bars for all intervention methods. To compute these error bars, we loop over all intervention methods and all layers and compute a confidence interval for each (method, layer) pair across all prompts. Assuming normally distributed errors, we compute error bars with the following standard formula:
$$EB = \mu \pm z * SE$$
where $\mu$ is the sample mean, $z$ is the z score (slightly larger than $2$ for 96\% error bars), and $SE$ is the standard error (the standard deviation divided by the square root of the number of samples). We use standard Python functions to compute this value.

\newpage
\begin{figure}[h!]
    \centering
    \includegraphics[width=0.9\textwidth]{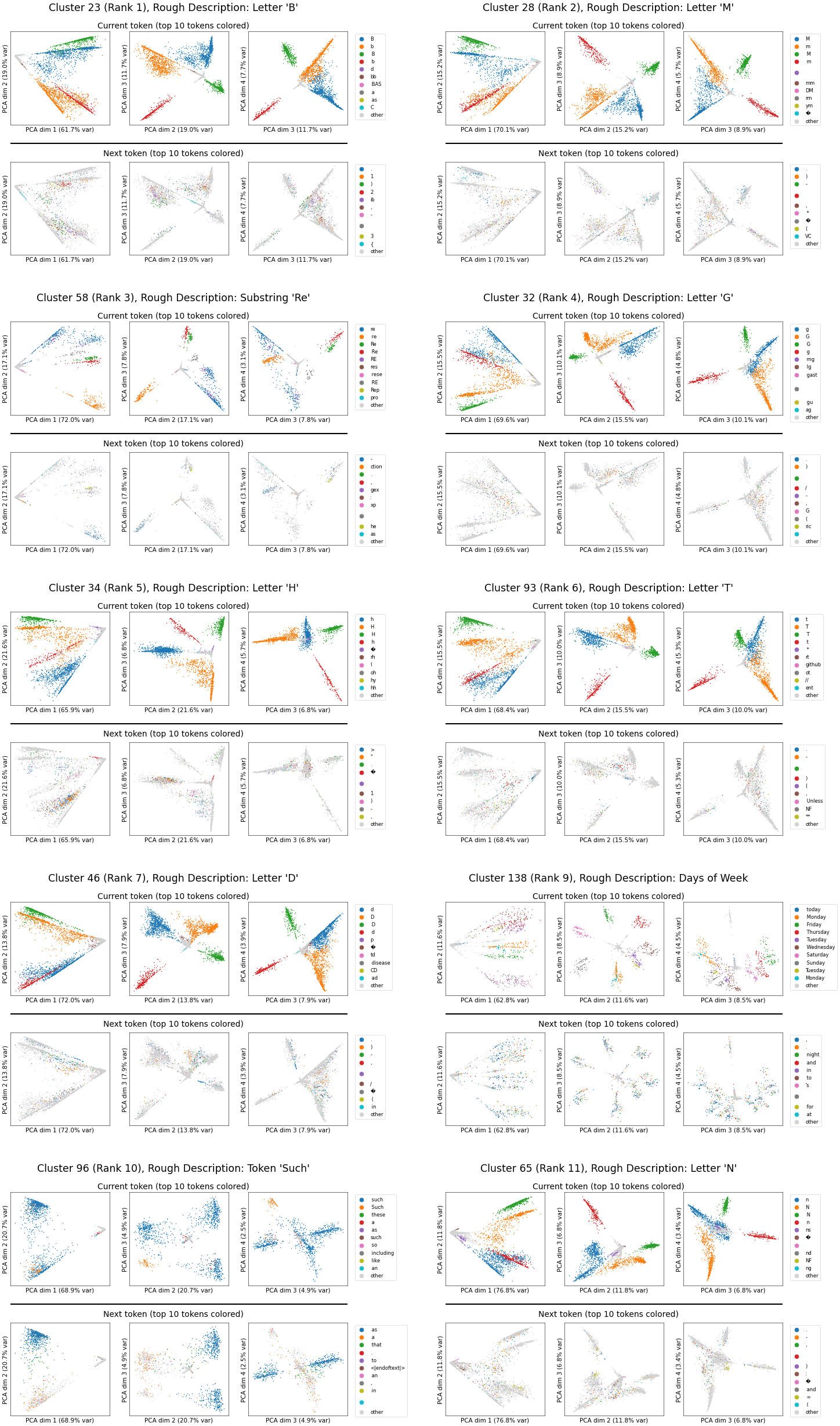}
    \caption{Top $10$ GPT-2 clusters by Mixture and Separability Index.}
    \label{fig:all-clusters}
\end{figure}

The reason that the \texttt{Months} error bars are smaller than the \texttt{Weekdays} error bars is because there are more \texttt{Months} prompts: there are $12 * 12 * 11 = 1584$ intervention effect values, rather than $7 * 7 * 6 = 294$ intervention effect values.

\begin{figure}
    \centering
    \subfloat[Mistral 7B, \texttt{Weekdays}\label{fig:mistral_pca_weekdays}]{
    \includegraphics[width=0.49\textwidth]{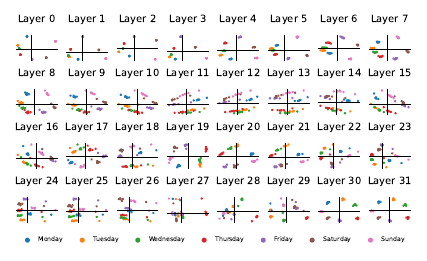}
    }
    \subfloat[Llama 3 8B, \texttt{Weekdays}\label{fig:llama_pca_weekdays}]{
    \includegraphics[width=0.49\textwidth]{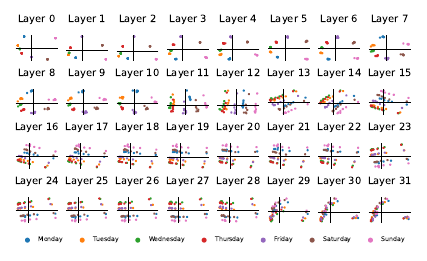}
    }\\
    \subfloat[Mistral 7B, \texttt{Months}\label{fig:mistral_pca_months}]{
    \includegraphics[width=0.49\textwidth]{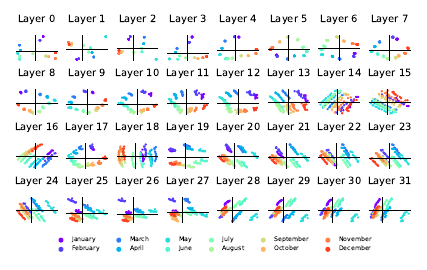}
    }
    \subfloat[Llama 3 8B, \texttt{Months}\label{fig:llama_pca_months}]{
    \includegraphics[width=0.49\textwidth]{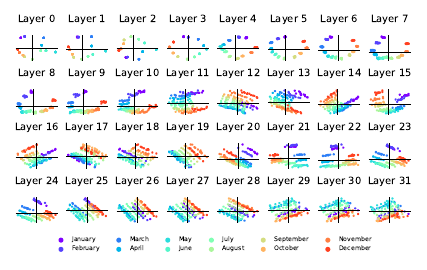}
    }
    \caption{Projections onto the top two PCA dimensions of model hidden states on the $\alpha$ token show that circular representations of $\alpha$ are present in various layers. }
    \label{fig:all_pcas}
\end{figure}

\section{More \texttt{Weekdays} and \texttt{Months} Plots and Details}
\label{appendix:more_circular_plots}

\begin{wrapfigure}{h}{0.4\textwidth}
  \vspace{-1cm}
  \begin{center}
    \includegraphics[width=0.4\textwidth]{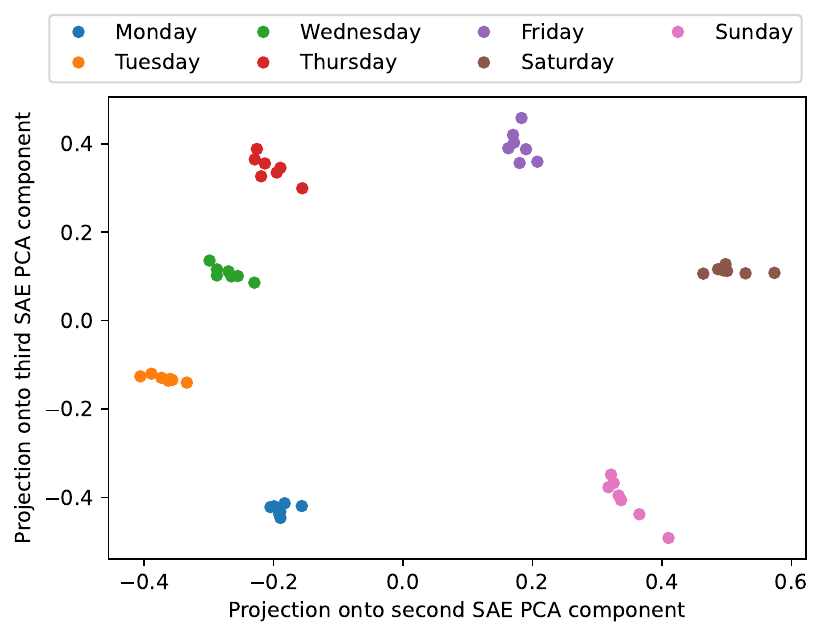}
  \end{center}
  \caption{Projections of Mistral 7B \texttt{Weekdays} task activations at layer 8 into the plane discovered by clustering layer 8 SAE features.}

\label{fig:sae_projections}
 \vspace{-0.3cm}
\end{wrapfigure}

\subsubsection{Basic Plots}
We show the results of Mistral 7B and Llama 3 8B on all individual instances of \texttt{Weekdays} that at least one of the models get wrong in \cref{tab:weekdays_finegrained} and present a similar table for \texttt{Months} in \cref{tab:months_finegrained}.

We show projections onto the top two PCA directions for both Mistral 7B and Llama 3 8B in \cref{fig:all_pcas} on the hidden layers on top of the $\alpha$ token, colored by $\alpha$. These are similar plots to \cref{fig:top_2_pca_small}, except they are on all layers. The circular structure in $\alpha$ is visible on many---but not all---layers. Much of the linear structure visible is due to $\beta$.

\subsubsection{Intervening with the SAE Probe}
We show the results of projecting Mistral \texttt{Weekdays} representations into the plane discovered by clustering SAE features in \cref{fig:sae_projections}. The result is clearly circular.

\begin{figure}[t]
    \centering
    \subfloat[Mistral 7B MLP Patching\label{fig:mistral_mlp_days_of_week_patching}]{%
        \includegraphics[width=0.48\textwidth]{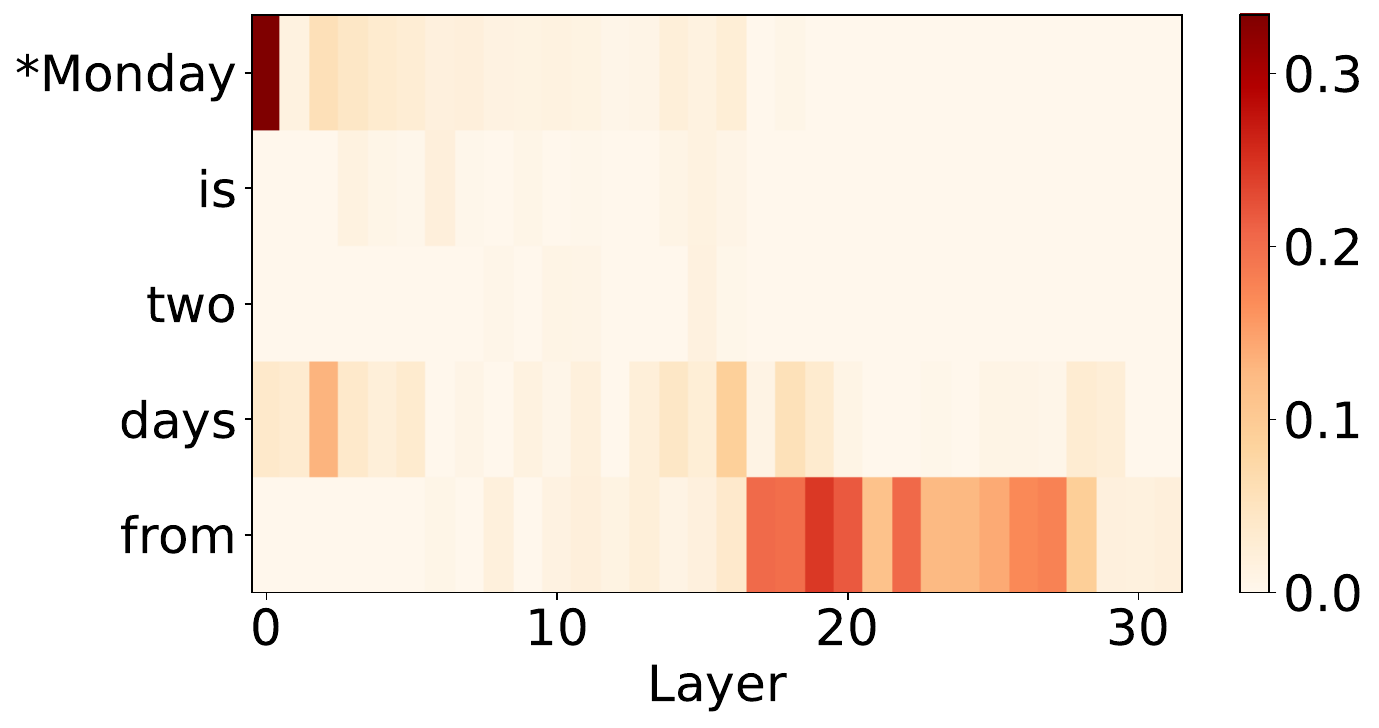}
    }
    \hfill 
    \subfloat[Mistral 7B attention patching\label{fig:mistral_attention_days_of_week_patching}]{%
        \includegraphics[width=0.48\textwidth]{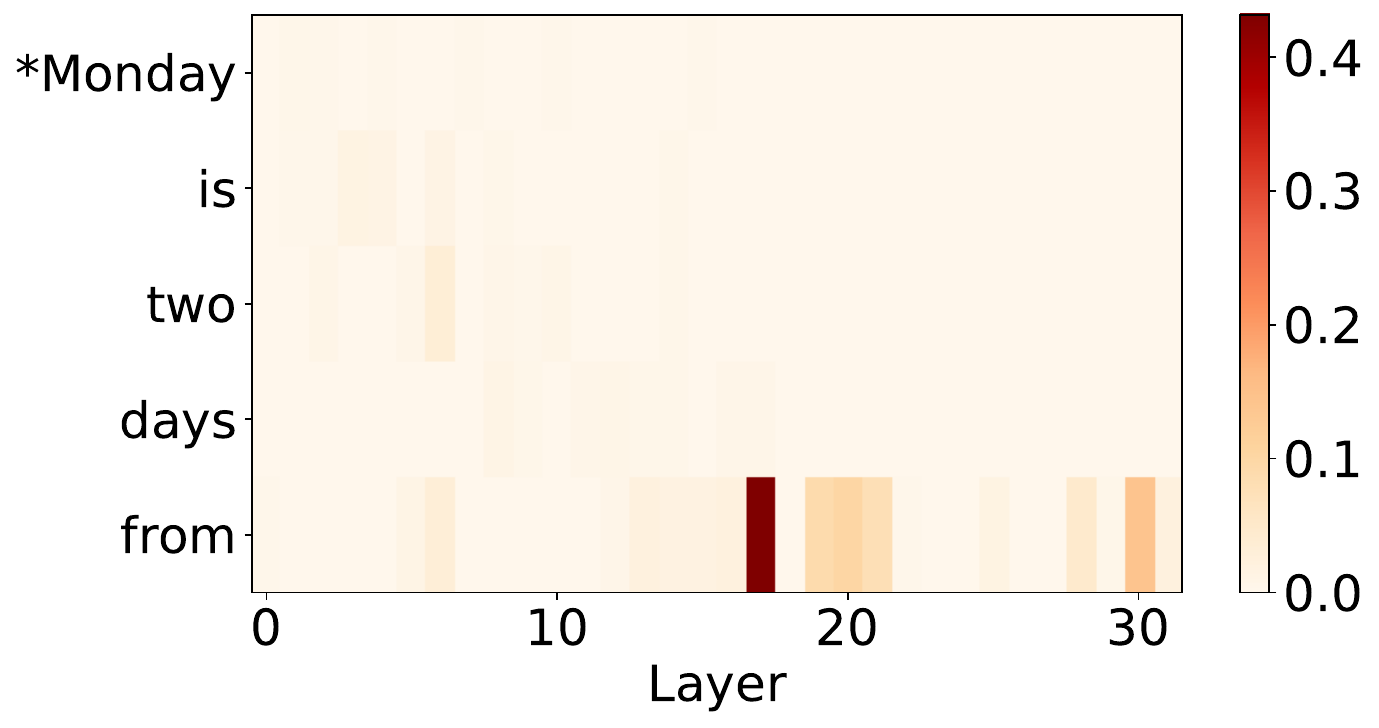}
    }
    \hfill 
    \subfloat[Llama 3 8B MLP patching\label{fig:llama_mlp_days_of_week_patching}]{%
        \includegraphics[width=0.48\textwidth]{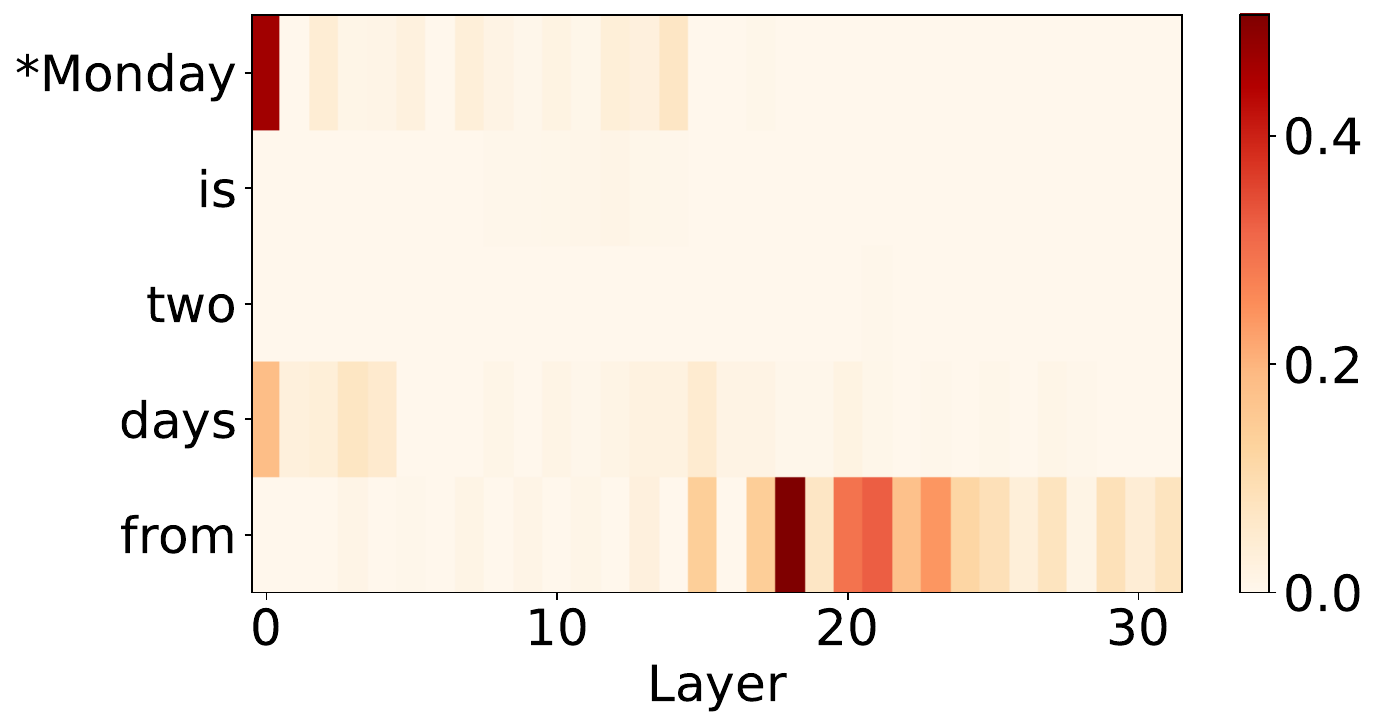}
    }
    \subfloat[Llama 3 8B attention patching\label{fig:llama_attention_days_of_week_patching}]{%
        \includegraphics[width=0.48\textwidth]{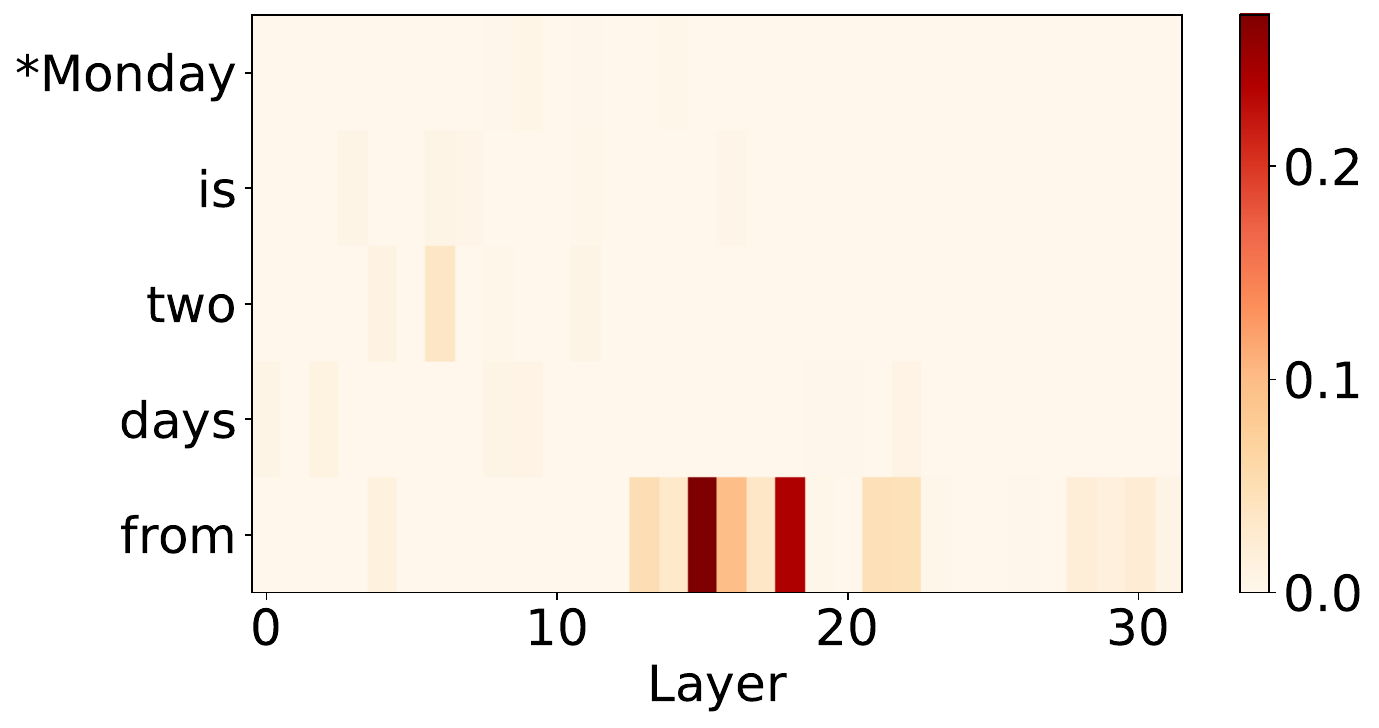}
    }
    \caption{Attention and MLP patching results on \texttt{Weekdays}. Results are averaged over 20 different runs with fixed $\alpha$ and varying $\beta$ and 20 different runs with fixed $\beta$ and varying $\alpha$.}
    \label{fig:weekdays_patching}
\end{figure}

\begin{figure}[t]
    \centering
    \subfloat[Mistral 7B MLP Patching\label{fig:mistral_mlp_months_of_year_patching}]{%
        \includegraphics[width=0.48\textwidth]{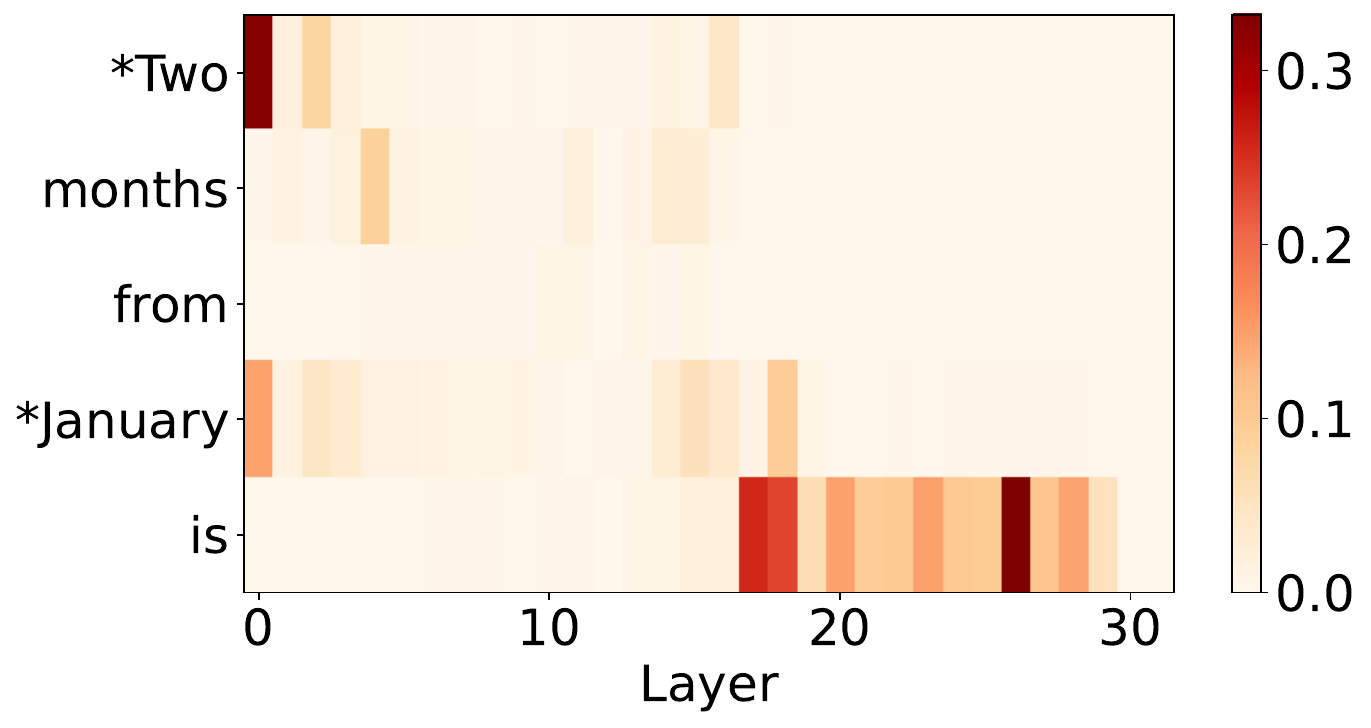}
    }
    \hfill 
    \subfloat[Mistral 7B attention patching\label{fig:mistral_attention_months_of_year_patching}]{%
        \includegraphics[width=0.48\textwidth]{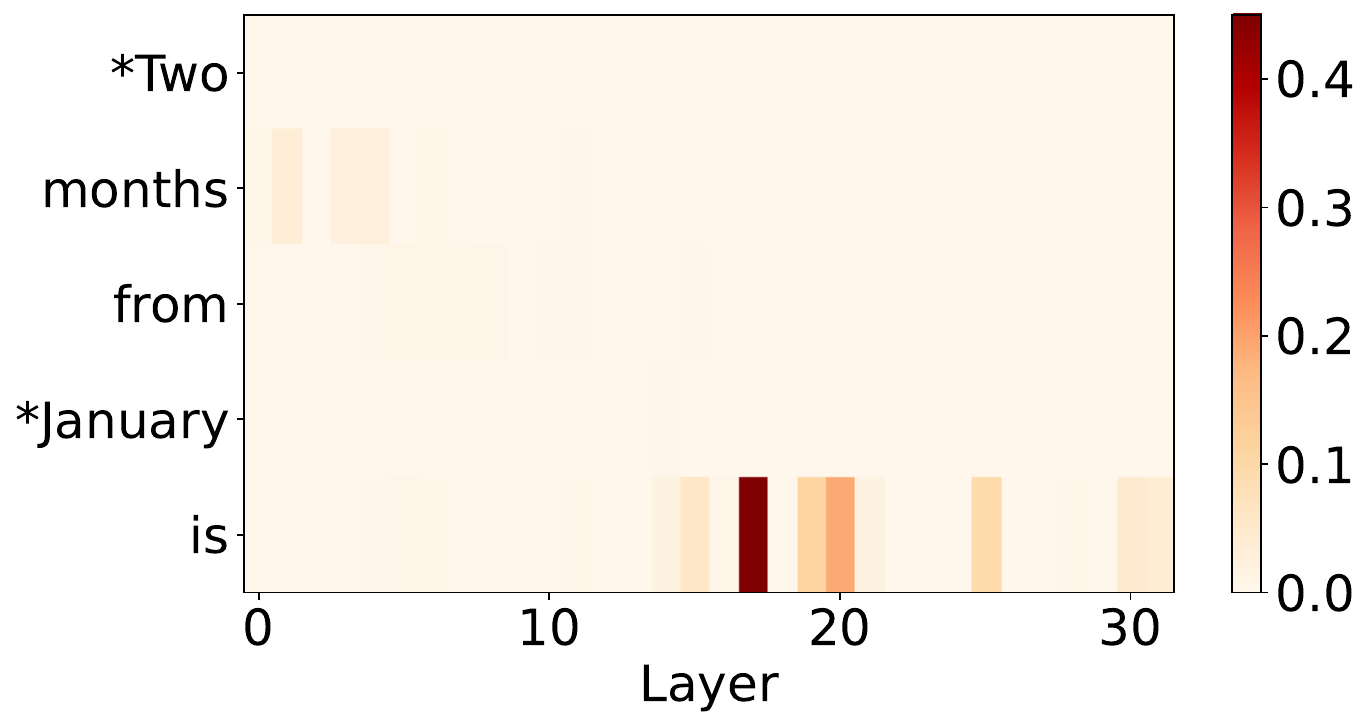}
    }
    \hfill 
    \subfloat[Llama 3 8B MLP patching\label{fig:llama_mlp_months_of_year_patching}]{%
        \includegraphics[width=0.48\textwidth]{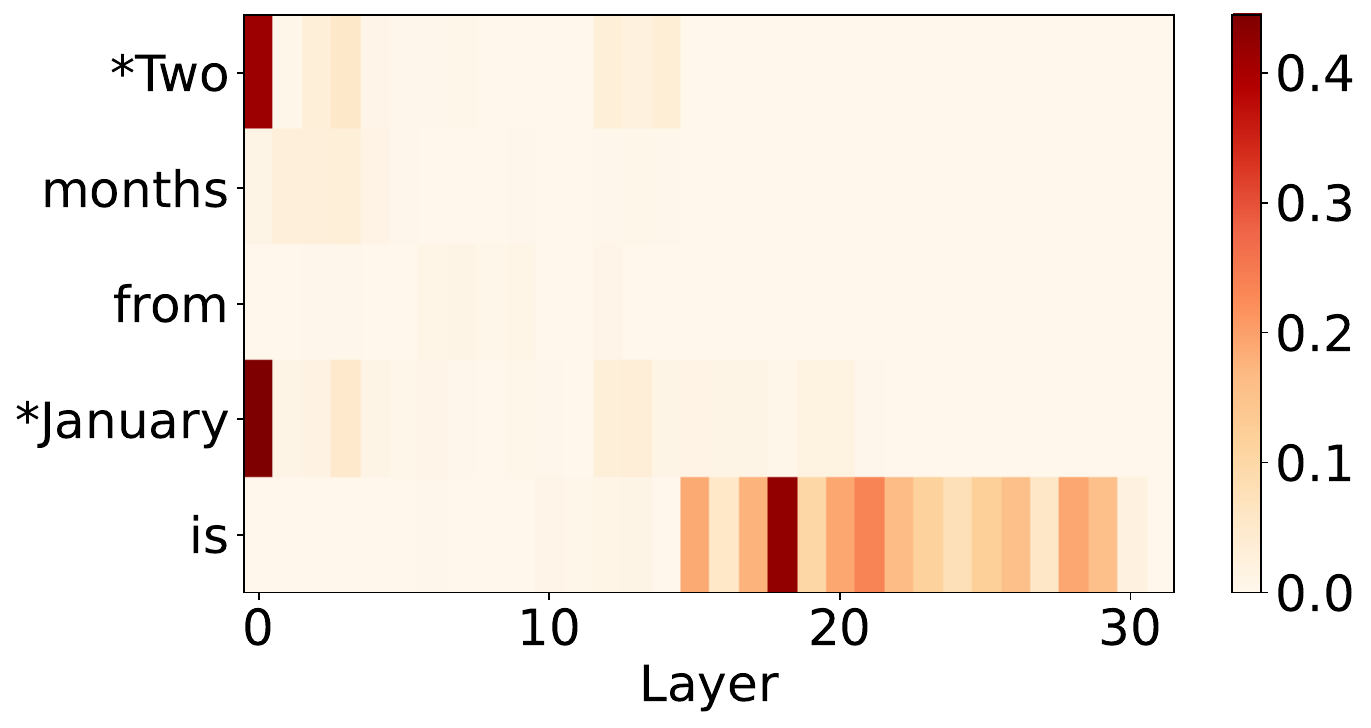}
    }
    \subfloat[Llama 3 8B attention patching\label{fig:llama_attention_months_of_year_patching}]{%
        \includegraphics[width=0.48\textwidth]{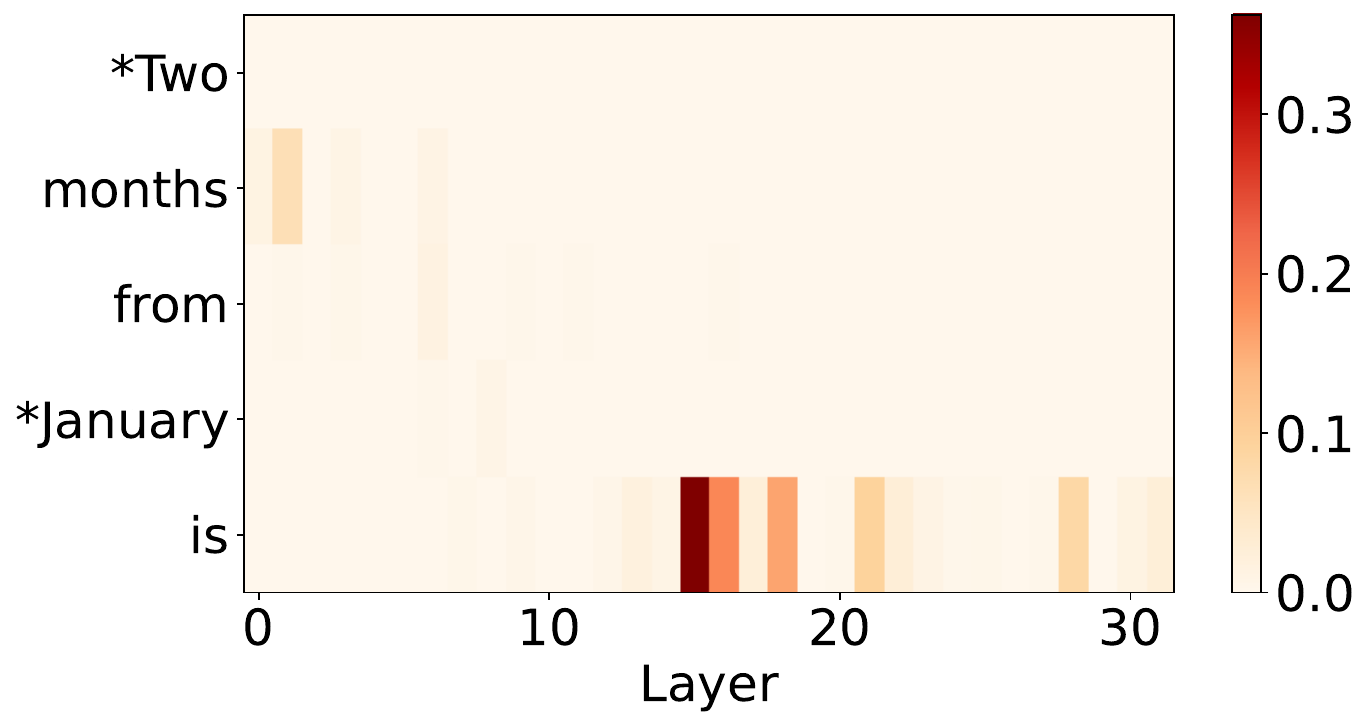}
    }
    \caption{Attention and MLP patching results on \texttt{Months}. Results are averaged over 20 different runs with fixed $\alpha$ and varying $\beta$ and 20 different runs with fixed $\beta$ and varying $\alpha$.}
    \label{fig:months_patching}
\end{figure}

\subsubsection{Basic Patching}
In \cref{fig:weekdays_patching} and \cref{fig:months_patching}, we report MLP and attention head patching results for \texttt{Weekdays} and \texttt{Months}. We experiment on 20 pairs of problems with the same $\alpha$ and different $\beta$ and 20 pairs of problems with the same $\beta$ and different $\alpha$, for a total of 40 pairs of problems. For each pair of problems, we patch the MLP/attention outputs from the "clean" to the "dirty" problem for each layer and token, and then complete the forward pass. Defining the logit difference as the logit of the clean $\gamma$ minus the logit of the dirty $\gamma$, we record what percent of the difference between the original logit difference of the dirty problem and the logit difference of the clean problem is recovered upon intervening, and average across these $40$ percentages for each layer and token. This gives us a score we call the \textit{Average Intervention Effect}.

For simplicity of presentation, we clip all of the (few) negative intervention averages to $0$ (prior work~\citep{best_practices_activation_patching} has also found negative-effect attention heads during patching experiments).

\subsubsection{Circle Continuity}

Finally, in \cref{fig:continuous_weekdays_mistral_1}, we show another example of the continuity of the circular days of the week representation in Mistral 7B.

\begin{table}
\caption{$\texttt{Weekdays}$ finegrained results. Row ommited if both models get it correct.}
\label{tab:weekdays_finegrained}
\begin{tabular}{rrlllll}
\toprule
$\alpha$ & $\beta$ & Ground truth $\gamma$ & Mistral top $\gamma$ & Mistral correct? & Llama top $\gamma$ & Llama correct? \\
\midrule
1 & 1 & Wednesday & Wednesday & Yes & Thursday & No \\
3 & 1 & Friday & Friday & Yes & Tuesday & No \\
4 & 1 & Saturday & Saturday & Yes & Thursday & No \\
3 & 2 & Saturday & Saturday & Yes & Tuesday & No \\
4 & 2 & Sunday & Sunday & Yes & Wednesday & No \\
5 & 2 & Monday & Monday & Yes & Tuesday & No \\
2 & 3 & Saturday & Friday & No & Saturday & Yes \\
3 & 3 & Sunday & Sunday & Yes & Tuesday & No \\
4 & 3 & Monday & Monday & Yes & Tuesday & No \\
0 & 4 & Friday & Thursday & No & Friday & Yes \\
3 & 4 & Monday & Monday & Yes & Tuesday & No \\
0 & 5 & Saturday & Friday & No & Saturday & Yes \\
1 & 5 & Sunday & Saturday & No & Wednesday & No \\
2 & 5 & Monday & Sunday & No & Monday & Yes \\
4 & 5 & Wednesday & Tuesday & No & Tuesday & No \\
6 & 5 & Friday & Thursday & No & Thursday & No \\
1 & 6 & Monday & Sunday & No & Thursday & No \\
2 & 6 & Tuesday & Monday & No & Tuesday & Yes \\
3 & 6 & Wednesday & Tuesday & No & Tuesday & No \\
4 & 6 & Thursday & Thursday & Yes & Tuesday & No \\
5 & 6 & Friday & Friday & Yes & Thursday & No \\
6 & 6 & Saturday & Thursday & No & Thursday & No \\
0 & 7 & Monday & Sunday & No & Tuesday & No \\
1 & 7 & Tuesday & Sunday & No & Tuesday & Yes \\
2 & 7 & Wednesday & Sunday & No & Wednesday & Yes \\
3 & 7 & Thursday & Sunday & No & Thursday & Yes \\
4 & 7 & Friday & Thursday & No & Tuesday & No \\
5 & 7 & Saturday & Friday & No & Saturday & Yes \\
6 & 7 & Sunday & Friday & No & Thursday & No \\
\bottomrule
\end{tabular}
\end{table}

\begin{table}
\caption{$\texttt{Months}$ finegrained results. Row ommited if both models get it correct.}
\label{tab:months_finegrained}
\begin{tabular}{rrlllll}
\toprule
$\alpha$ & $\beta$ & Ground truth $\gamma$ & Mistral top $\gamma$ & Mistral correct? & Llama top $\gamma$ & Llama correct? \\
\midrule
0 & 4 & May & April & No & May & Yes \\
6 & 4 & November & October & No & November & Yes \\
0 & 6 & July & June & No & July & Yes \\
0 & 7 & August & July & No & August & Yes \\
1 & 7 & September & October & No & September & Yes \\
3 & 7 & November & October & No & November & Yes \\
5 & 7 & January & December & No & January & Yes \\
6 & 7 & February & January & No & February & Yes \\
7 & 7 & March & February & No & March & Yes \\
9 & 7 & May & April & No & May & Yes \\
4 & 9 & February & February & Yes & January & No \\
2 & 10 & January & December & No & January & Yes \\
8 & 10 & July & June & No & July & Yes \\
1 & 11 & January & December & No & January & Yes \\
2 & 11 & February & December & No & February & Yes \\
3 & 11 & March & February & No & March & Yes \\
7 & 11 & July & June & No & July & Yes \\
8 & 11 & August & July & No & August & Yes \\
9 & 11 & September & August & No & September & Yes \\
0 & 12 & January & December & No & January & Yes \\
\bottomrule
\end{tabular}
\end{table}

\section{Patching}

\label{appendix:patching}

In this section, we present results to support a claim that MLPs (and not attention blocks) are responsible for computing $\gamma$. In \cref{fig:feature_deconstructions3}, we deconstruct states on top of the final token (before predicting $\gamma$) on Llama 3 8B \texttt{Months} (we show a similar plot for the states on the final token of Mistral 7B on \texttt{Weekdays} in the main text in \cref{fig:mistral_weekdays_feature_deconstructions}. These plots show that the value of $\gamma$ is computed on the final token around layers $20$ to $25$. To show that this computation of occurs in the MLPs, we must show that no attention head is copying $\gamma$ from a prior token or directly computing $\gamma$.

\begin{wrapfigure}{h}{0.4\textwidth}
  \vspace{-1cm}
  \begin{center}
    \includegraphics[width=0.4\textwidth]{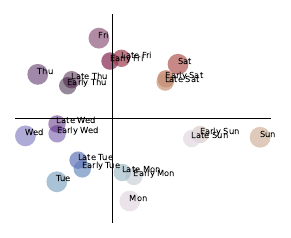}
  \end{center}
  \caption{Layer $30$ Mistral 7B activations for [very early/very late] on [Monday/Tuesday/.../Sunday], plotted projected into the PCA plane for [Monday/Tuesday/.../Sunday].}

\label{fig:continuous_weekdays_mistral_1}
 \vspace{-0.3cm}
\end{wrapfigure}
We first perform a patching experiment with the same setup \cref{fig:months_patching} and \cref{fig:weekdays_patching} on individual attention heads on the final token. From the patching results we identify the top $10$ attention heads by average intervention effect. For each attention head, we compute one EVR run with explanatory functions equal to one-hot functions of $\alpha$ and $\beta$ (resulting in $14$ functions $\g_i$ for \texttt{Weekdays} and $24$ for \texttt{Months}) and one with explanatory functions equal to one-hot functions of $\alpha$, $\beta$, and $\gamma$. We find that for all layers before $25$, adding $\gamma$ to the explanatory functions adds almost no explanatory power. Since we established above that the model has already computed $\gamma$ at this point, we know that attention heads do not participate in computing $\gamma$.

To isolate the rough circuit for \texttt{Weekdays} and \texttt{Months}, we perform layer-wise activation patching on $40$ random pairs of prompts. The results, displayed in \cref{fig:months_patching}
show that the circuit to compute $\gamma$ consists of MLPs on top of the $\alpha$ and $\beta$ tokens, a copy to the token before $\gamma$, and further MLPs there (roughly similar to prior work studying arithmetic circuits~\citep{mechanistic_interp_arithmetic}). Moreover, fine-grained patching in \cref{appendix:c_circle_analysis} shows that there are just a few responsible attention heads for the writes to the token before $\gamma$. However, patching alone cannot tell use \textit{how} or \textit{where} $\gamma$ is represented. For that, we need a new technique, which we expand on in the next section.

\section{Explanation via Regression (EVR)}
\label{appendix:c_circle_analysis}

\label{sec:uncovering}

So far, we have focused on examining and intervening on the representation for $\alpha$, which we present as a circle in the top PCA components on top of the $\alpha$ token. In this section, we examine how the generated output, $\gamma$, is represented. 

First, to isolate the rough circuit for \texttt{Weekdays} and \texttt{Months}, we perform layer-wise activation patching on $40$ random pairs of prompts. The results, displayed in \cref{fig:months_patching} and \cref{fig:weekdays_patching},  
show that the circuit to compute $\gamma$ consists of MLPs on top of the $\alpha$ and $\beta$ tokens, a copy to the token before $\gamma$, and further MLPs there (roughly similar to what \cite{mechanistic_interp_arithmetic} find in prior work studying arithmetic circuits). Thus, we know \textit{where} to look for a representation of $\gamma$: in the second half of the layers on the token before $\gamma$. However, patching alone cannot tell use \textit{how} $\gamma$ is represented.

\begin{wrapfigure}{h}{0.3\textwidth}
  \vspace{-0.5cm}
  \begin{center}
    \includegraphics{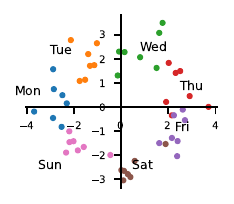}
  \end{center}
  \vspace{-0.4cm}
  \caption{Top two PCA components of residual errors after EVR with one-hot in $\alpha$ and $\beta$. Mistral 7B \texttt{Weekdays}, layer $25$, final token. Colored by $\gamma$.}
  
\label{fig:residuals_after_explanation}
 \vspace{-0.4cm}
\end{wrapfigure}

Unlike $\alpha$, $\gamma$ has no obvious circular (or linear) pattern in the top PCA components on these layers. To determine the representation for $\gamma$, we introduce a more powerful technique we call Explanation via Regression (EVR): given a set of token sequences with a corresponding set of hidden states $X_{i, l}$, we choose a set of interpretable explanation functions of the input tokens $\{\g_j(t)\}$. The $r^2$ value of a linear regression from $\{\g_j(t)\}$ to $X_{i,l}$ tells us how much of the variance in the activations the $\{\g_j(t)\}$ explain, and conversely the residuals show the exact components of the representation we have yet to explain.

\subsection{Using EVR to uncover a circular representation for $\gamma$}

We first use EVR to determine the representation for $\gamma$ by plotting the top two PCA components of the layer $25$ Mistral 7B activations after subtracting the components that can be explained using a regression with one hot functions in $\alpha$ and $\beta$ (i.e. $\g_1 = [\alpha = 0], \g_2 = [\beta = 1], \g_3 = [\alpha = 1], \ldots$). The result, shown in \cref{fig:residuals_after_explanation}, is an incredibly clear circle in $\gamma$, which suggests that the model's generated representation of $\gamma$ lies along a circle. A simple PCA projection was not enough to find this result because the representation for $\gamma$ has interference from $\alpha$ and $\beta$, which the EVR removes. This suggests that the models may be generating $\gamma$ by using a trigonometry based algorithm like the ``clock''~\cite{neel_grokking} or ``pizza''~\cite{clock_and_pizza} algorithm in late MLP layers.

\subsection{More Experiments with EVR}


\begin{figure}
    \centering
    \includegraphics[width=0.9\textwidth]{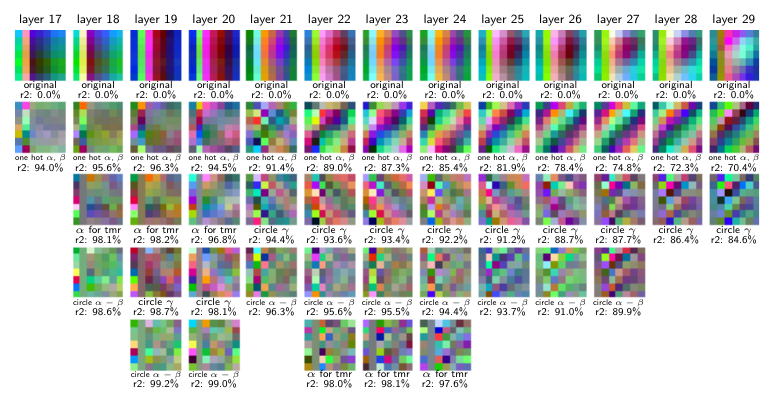}
    \caption{EVR residual RGB plots on Mistral hidden states on the \texttt{Weekdays} final token, layers 17 to 29. From top to bottom, we show each residual RGB plot after adding the function(s) $\g_i$ labelled just underneath, as well as the resulting $r^2$ value. We write ``tmr'' meaning ``tomorrow'' for $\beta=1$. We also write ``circle for $x$'' meaning the inclusion of two functions $\g_i(x)=\{\cos,\sin\}(2\pi x/7)$.}
    \label{fig:mistral_weekdays_feature_deconstructions}
\end{figure}

We now apply EVR to \texttt{Months} and \texttt{Weekdays} to break down $X_{i, l}$ completely into interpretable functions. We build a list of $\g_i$ \textit{iteratively} and \textit{greedily}. At each iteration, we perform a linear regression with the current list $\g_1\dots \g_k$, visualize and interpret the residual prediction errors, and build a new function $\g_{k+1}$ representing these errors to add to the list. Once most variance is explained, we can conclude that $\g_1,\dots,\g_k$ constitutes the entirety of what is represented in the hidden states. This information tells us what can and cannot be extracted via a linear probe, without having to train any probes. Furthermore, if we treat each $\g_i$ as a \textit{feature} (see \cref{def:feature}), then the linear regression coefficients tell us which directions in $X_{i, l}$ these features are represented in, connecting back to \cref{hyp:our_superposition}.

 Since $X_{i, l}$ consists of modular addition problems with two inputs $\alpha$ and $\beta$, we can visualize the errors as we iteratively construct $\g_1, \ldots, \g_k$ by making a heatmap with $\alpha$ and $\beta$ on the two axes, where the color shows what kind of error is made. More specifically, we take the top 3 PCA components of the error distribution and assign them to the colors red, green, and blue. We call the resulting heatmap a \textbf{residual RGB plot}. Errors that depend primarily on $\alpha$, $\beta$, or $\gamma$ show up as horizontal, vertical, or diagonal stripes on the residual RGB plot.

In ~\cref{fig:mistral_weekdays_feature_deconstructions}, we perform EVR on the layer 17-29 hidden states of Mistral 7B on the \texttt{Weekdays} task; additional deconstructions are in \cref{appendix:c_circle_analysis}. We find that a circle in $\gamma$ develops and grows in explanatory power; we plot the layer $25$ residuals after explaining with one hot functions in $\alpha$ and $\beta$ (i.e. $\g_1 = [\alpha = 0], \g_2 = [\beta = 1], \g_3 = [\alpha = 1], \ldots$) in \cref{fig:residuals_after_explanation} to show this incredibly clear circle in $\gamma$. This suggests that the models may be generating $\gamma$ by using a trigonometry based algorithm like the ``clock''~\citep{neel_grokking} or ``pizza''~\citep{clock_and_pizza} algorithm in late MLP layers.

\begin{table}[h!]
    \centering
    \caption{Highest intervention effect attention heads from fine-grained attention head patching, as well as EVR results with one hot $\alpha, \beta$ and one hot $\alpha, \beta, \gamma$.}
    \label{tab:top_attention_heads}
    \vspace{0.2cm}
    \subfloat[Mistral 7B, \texttt{Weekdays}.]{
        \begin{tabular}{rrrrr}
        \toprule
        L & H & \multicolumn{1}{p{1.2cm}}{\centering Average Intervention Effect} & \multicolumn{1}{p{1.2cm}}{\centering EVR $R^2$  One Hot $\alpha$, $\beta$ }& \multicolumn{1}{p{1.2cm}}{\centering EVR $R^2$  One Hot $\alpha$, $\beta$, $\gamma$ } \\
        \midrule
        28 & 18 & 0.22 & 0.39 & 0.73 \\
        18 & 30 & 0.17 & 0.95 & 0.96 \\
        15 & 13 & 0.17 & 0.94 & 0.95 \\
        22 & 15 & 0.11 & 0.77 & 0.82 \\
        16 & 21 & 0.09 & 0.92 & 0.93 \\
        28 & 16 & 0.08 & 0.42 & 0.69 \\
        15 & 14 & 0.06 & 0.98 & 0.99 \\
        30 & 24 & 0.05 & 0.43 & 0.79 \\
        21 & 26 & 0.04 & 0.53 & 0.63 \\
        14 & 2 & 0.04 & 0.93 & 0.95 \\
        \bottomrule
        \end{tabular}
    }\hfill
    \subfloat[Llama 3 8B, \texttt{Weekdays}.]{
        \begin{tabular}{rrrrr}
        \toprule
        L & H & \multicolumn{1}{p{1.2cm}}{\centering Average Intervention Effect} & \multicolumn{1}{p{1.2cm}}{\centering EVR $R^2$  One Hot $\alpha$, $\beta$ }& \multicolumn{1}{p{1.2cm}}{\centering EVR $R^2$  One Hot $\alpha$, $\beta$, $\gamma$ } \\
        \midrule
        17 & 0 & 0.18 & 0.98 & 0.99 \\
        17 & 1 & 0.08 & 0.98 & 0.98 \\
        19 & 10 & 0.08 & 0.95 & 0.96 \\
        30 & 17 & 0.07 & 0.85 & 0.90 \\
        17 & 3 & 0.07 & 0.93 & 0.95 \\
        17 & 27 & 0.06 & 1.00 & 1.00 \\
        31 & 22 & 0.05 & 0.37 & 0.78 \\
        21 & 9 & 0.04 & 0.73 & 0.78 \\
        20 & 28 & 0.04 & 1.00 & 1.00 \\
        30 & 16 & 0.04 & 0.73 & 0.85 \\
        \bottomrule
        \end{tabular}
    }\hfill\\
    \subfloat[Mistral 7B, \texttt{Months}.]{
        \begin{tabular}{rrrrr}
        \toprule
        L & H & \multicolumn{1}{p{1.2cm}}{\centering Average Intervention Effect} & \multicolumn{1}{p{1.2cm}}{\centering EVR $R^2$  One Hot $\alpha$, $\beta$ }& \multicolumn{1}{p{1.2cm}}{\centering EVR $R^2$  One Hot $\alpha$, $\beta$, $\gamma$ } \\
        \midrule
        20 & 28 & 0.15 & 0.76 & 0.76 \\
        17 & 0 & 0.10 & 0.77 & 0.77 \\
        25 & 14 & 0.08 & 0.19 & 0.61 \\
        17 & 1 & 0.07 & 0.80 & 0.82 \\
        17 & 3 & 0.06 & 0.71 & 0.71 \\
        31 & 22 & 0.06 & 0.12 & 0.67 \\
        17 & 27 & 0.05 & 0.58 & 0.58 \\
        19 & 4 & 0.05 & 0.40 & 0.66 \\
        19 & 10 & 0.04 & 0.62 & 0.62 \\
        30 & 26 & 0.04 & 0.51 & 0.62 \\
        \bottomrule
        \end{tabular}
    }\hfill
    \subfloat[Llama 3 8B, \texttt{Months}.]{
        \begin{tabular}{rrrrr}
        \toprule
        L & H & \multicolumn{1}{p{1.2cm}}{\centering Average Intervention Effect} & \multicolumn{1}{p{1.2cm}}{\centering EVR $R^2$  One Hot $\alpha$, $\beta$ }& \multicolumn{1}{p{1.2cm}}{\centering EVR $R^2$  One Hot $\alpha$, $\beta$, $\gamma$ } \\
        \midrule
        15 & 13 & 0.26 & 0.62 & 0.62 \\
        16 & 21 & 0.17 & 0.76 & 0.76 \\
        18 & 30 & 0.13 & 0.77 & 0.77 \\
        28 & 18 & 0.11 & 0.13 & 0.52 \\
        28 & 16 & 0.07 & 0.13 & 0.52 \\
        21 & 25 & 0.05 & 0.65 & 0.70 \\
        15 & 14 & 0.03 & 0.72 & 0.72 \\
        17 & 26 & 0.02 & 0.77 & 0.77 \\
        31 & 1 & 0.02 & 0.11 & 0.57 \\
        21 & 24 & 0.02 & 0.30 & 0.45 \\
        \bottomrule
        \end{tabular}
    }
\end{table}


\begin{figure}
    \centering
    \includegraphics[width=\textwidth]{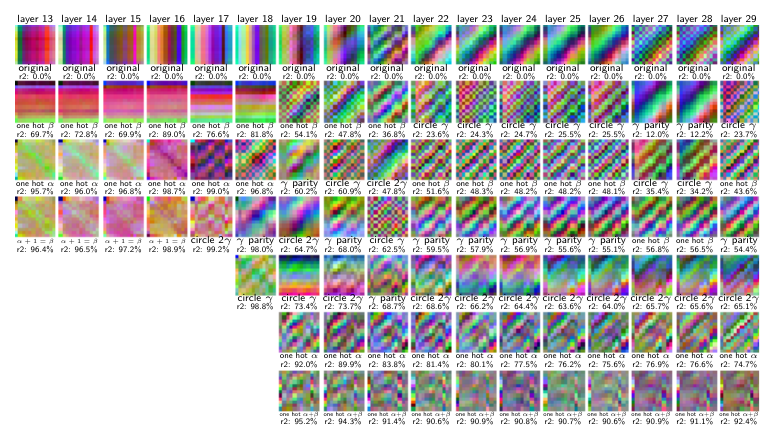}
    \caption{Iterative deconstruction of hidden state representations on the final token on Llama 3 8B, \texttt{Months}.}
    \label{fig:feature_deconstructions3}
\end{figure}


\end{document}